\theoremstyle{plain} 
\newtheorem{definition}{Definition}
\newtheorem{lemma}{Lemma}
\title{Towards Robust Image Denoising with Scale Equivariance}
\author{
    Dawei Zhang, Xiaojie Guo\thanks{Corresponding author.}
}
\begin{document}

\maketitle

\begin{abstract}

Despite notable advances in image denoising, existing models often struggle to generalize beyond in-distribution noise patterns, particularly when confronted with out-of-distribution (OOD) conditions characterized by spatially variant noise. This generalization gap remains a fundamental yet underexplored challenge.
In this work, we investigate \emph{scale equivariance} as a core inductive bias for improving OOD robustness. We argue that incorporating scale-equivariant structures enables models to better adapt from training on spatially uniform noise to inference on spatially non-uniform degradations.
Building on this insight, we propose a robust blind denoising framework equipped with two key components: a Heterogeneous Normalization Module (HNM) and an Interactive Gating Module (IGM). HNM stabilizes feature distributions and dynamically corrects features under varying noise intensities, while IGM facilitates effective information modulation via gated interactions between signal and feature paths.
Extensive evaluations demonstrate that our model consistently outperforms state-of-the-art methods on both synthetic and real-world benchmarks, especially under spatially heterogeneous noise. Code will be made publicly available.
\end{abstract}

\section{Introduction}
The field of image denoising has progressed significantly, evolving from traditional prior-based methods to deep learning frameworks that deliver state-of-the-art performance. However, these advances are largely confined to a supervised learning paradigm, which exposes a fundamental vulnerability, \textit{i.e.}, models trained on paired data with in-distribution (ID) noise often fail to generalize when confronted with out-of-distribution (OOD) noise patterns that are more complex and spatially varying. As illustrated in Fig.~\ref{fig:intro_show}, performance degrades sharply under such mismatched conditions. 
This generalization gap is commonly attributed to overfitting, \textit{i.e.}, networks tend to entangle signal representations with the noise characteristics seen during training. Although recent self-supervised and zero-shot methods offer improved flexibility, they typically underperform in terms of denoising quality and remain fragile under true OOD settings. As such, the field faces a critical dilemma: \emph{how to attain the high fidelity of supervised denoising without inheriting its brittleness}.

\begin{figure}[t]
\centering
\includegraphics[scale=0.15]{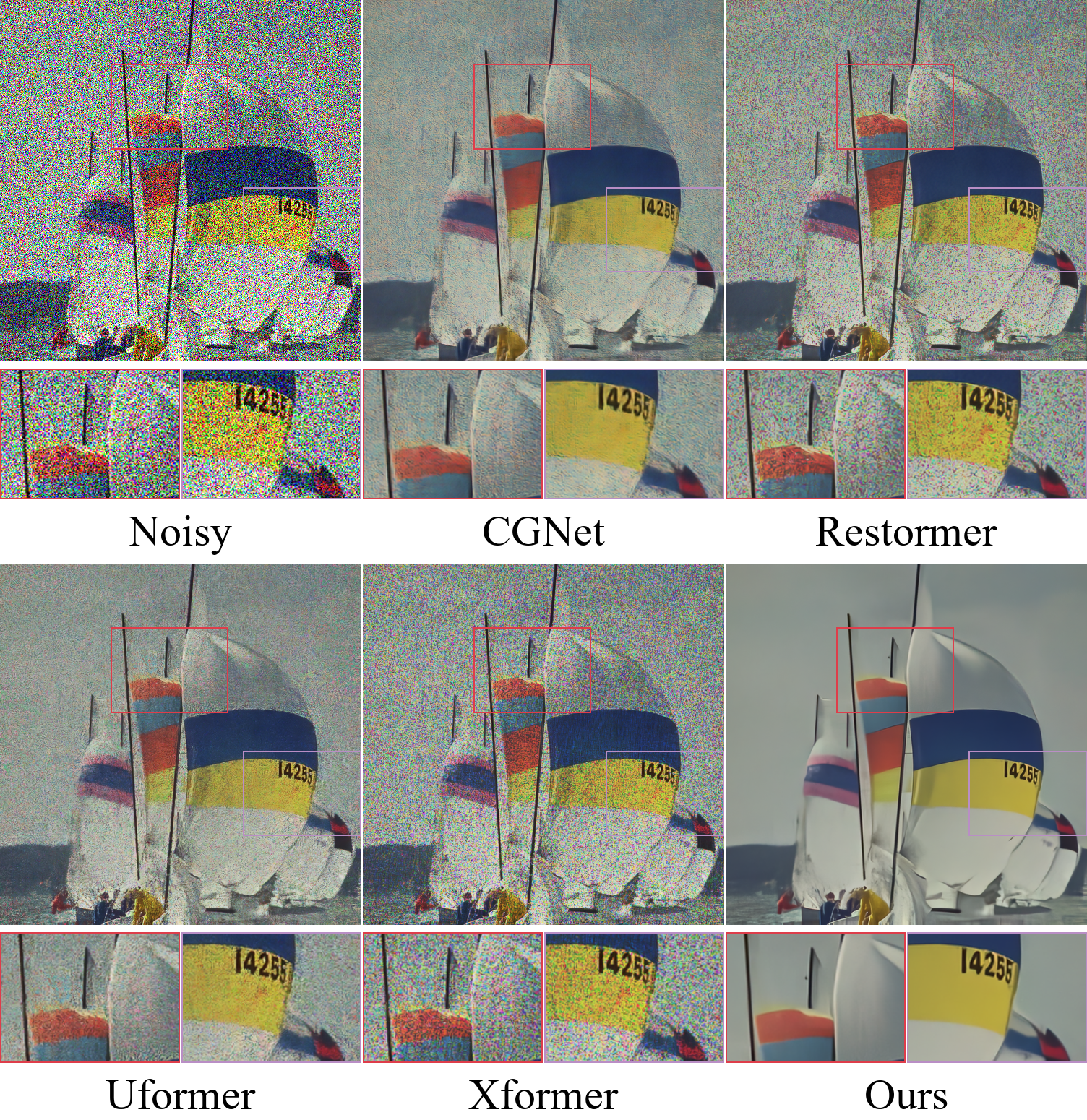}
\caption{Visual comparison on a sample with OOD noise. } 

\vspace{-17pt}
\label{fig:intro_show}
\end{figure}

To tackle this issue, existing approaches have explored several directions. One straightforward strategy is to diversify the training noise distribution, using synthesized complex noise such as mixtures~\cite{DBLP:conf/cvpr/GuoY0Z019, DBLP:journals/ijautcomp/ZhangLLCZTFTG23} or adversarial perturbations~\cite{DBLP:conf/cvpr/ChenG0MDWPZ23}. Another line of work focuses on reformulating the learning objective,~\emph{e.g.}, by encouraging distortion-invariant representations through MaskedDenoising~\cite{DBLP:conf/cvpr/ChenG0MDWPZ23} or CLIP-based alignment~\cite{DBLP:journals/corr/abs-2403-15132}. A third direction leverages noise adaptation, where pretrained models are fine-tuned for specific OOD conditions~\cite{DBLP:conf/cvpr/Kim0B24, DBLP:journals/corr/abs-2412-04727, DBLP:journals/corr/abs-2503-21377}.
While effective to some extent, these methods primarily focus on data, objectives, or adaptation protocols, leaving the network architecture itself largely unexamined. In contrast, we argue that OOD robustness should be viewed as an inductive property of the network design rather than merely a byproduct of training strategies.

The core challenge in generalizing from uniform to non-uniform noise lies in the entanglement of features with spatially varying noise level maps. Ideally, a denoising model should ``factor out" the influence of the noise level. However, most existing architectures (see Fig.~\ref{fig:intro_show}) are ill-equipped to do so, lacking an explicit mechanism to disentangle this interaction. Motivated by this, we revisit a fundamental design principle, namely \emph{scale equivariance}, which is a property that ensures consistent behavior under input scaling and is closely related to first-order homogeneity.
We find that enforcing scale equivariance within the network architecture can suppress the impact of spatially varying noise levels, leading to improved generalization and train-test consistency. To this end, we design operations and modules that promote scale-equivariant behavior.

Modern denoising networks often adopt normalization layers to stabilize training. However, conventional normalization schemes (\emph{e.g.}, BatchNorm, LayerNorm) are scale-independent and can inadvertently amplify the entanglement between feature maps and noise levels. We address this by proposing a simple Constant Scaling (CS) operation, which normalizes features by the number of channels, reducing variance without introducing scale distortion.
Moreover, pixel-wise OOD noise introduces inconsistent distortions across feature dimensions. To mitigate this issue, we introduce the Normalized Self-Modulator (NSM), which transforms pixel features into a normalized space and modulates them using their original values to maintain scale equivariance. Together, CS and NSM constitute our Heterogeneous Normalization Module (HNM).
Another critical component is the activation function. Standard nonlinearities, especially those involving exponential terms, can disrupt scale properties. Apart from common activations, the star operation/gating mechanism ~\cite{DBLP:conf/cvpr/0005DBW024} achieves high-dimensional nonlinear mapping in a low-dimensional space but is typically second-order homogeneous. To retain first-order scale equivariance, we modify and extend the gating design via introducing the Interactive Gating Module (IGM). This module uses a dual-signal scaling strategy, allowing information flow to adapt jointly to both feature and gating signals, thereby enhancing selective filtering under non-uniform noise conditions.

Our major contributions are summarized as follows:
\begin{itemize}
     \item We identify \emph{scale equivariance} as a key role for mitigating the train-test inconsistency caused by non-uniform noise levels. Guided by this insight, we design a robust blind denoising network tailored for OOD scenarios.

    \item  We propose the Heterogeneous Normalization Module, which integrates feature scaling and normalized self-modulation to strike a stable feature representation and flexible feature correction. Moreover, the Interactive Gating Module is proposed to provide powerful nonlinear representations and improve the feature selection.

    \item Extensive experiments reveal the superior denoising robustness of our network against OOD noises, especially in high-level noise conditions. Our model consistently outperforms existing approaches with advanced perceptual quality across diverse degradation patterns.
    
\end{itemize}

\section{Related Work}
\label{sec:RW}
This part will briefly review representative works related to deep learning-based and robust image denoising.

\noindent  \textbf{Deep learning-based image denoising.}
Deep learning has become the dominant paradigm in image denoising, with early successes driven by Convolutional Neural Networks (CNNs)\cite{DBLP:conf/eccv/ZamirAKHKYS20, DBLP:conf/eccv/ChangLFX20, DBLP:journals/tip/ZhangLLSKF21, DBLP:conf/aaai/ShenZZ23, ghasemabadi2024cascadedgaze}, and more recent advances powered by Vision Transformer (ViT) architectures~\cite{DBLP:conf/cvpr/WangCBZLL22, DBLP:conf/cvpr/ZamirA0HK022, DBLP:conf/cvpr/LiFXDRTG23, DBLP:conf/iclr/ZhangZGDKY24, DBLP:conf/eccv/ChenLPLZQD24}, which leverage sophisticated self-attention mechanisms and normalization strategies. While these designs excel on ID data, their supervised nature makes them prone to overfitting, thus limiting their OOD robustness. 
To address this, alternative paradigms such as self-supervised learning~\cite{DBLP:conf/cvpr/ZhangZJF23, DBLP:conf/eccv/LiaoZZZR24, DBLP:conf/aaai/LiZZ25, DBLP:conf/cvpr/LiuFXM25} and zero-shot denoising~\cite{DBLP:conf/cvpr/QuanCPJ20, DBLP:conf/cvpr/MansourH23, DBLP:journals/pami/MaJZLLM25} have emerged. These approaches enable model training directly on unpaired or noisy data without requiring clean targets, for offering greater flexibility. However, they often fall short in denoising quality and still face OOD limitations, as their training remains implicitly tied to the noise distribution in the available data.
Thus, a fundamental challenge remains, saying how to retain the high performance of supervised frameworks while overcoming their inherent lack of robustness under OOD noise. This question continues to motivate much of the current research in robust image denoising.

\noindent  \textbf{Robust image denoising.}
Robust image denoising refers to a model's ability to maintain high performance when exposed to unseen or OOD noise patterns. Existing efforts to enhance such robustness can be broadly categorized as follows.
One common approach is to enrich the training data by simulating complex noise, such as via mixture noise modeling~\cite{DBLP:conf/cvpr/GuoY0Z019, DBLP:journals/ijautcomp/ZhangLLCZTFTG23} or adversarial perturbations~\cite{DBLP:conf/cvpr/ChenG0MDWPZ23}. Although these strategies help improve diversity during training, they also risk contaminating the evaluation setup by inadvertently exposing the model to OOD-like noise during training, thus complicating generalization assessment.
A second direction involves learning degradation-invariant representations, encouraging models to focus on clean image content rather than the noise distribution. Representative methods include MaskedDenoising and CLIPDenoising~\cite{DBLP:journals/corr/abs-2403-15132}. However, these methods often come at the cost of reduced image quality, particularly under high noise levels.
A third class of methods emphasizes noise adaptation. For instance, LAN~\cite{DBLP:conf/cvpr/Kim0B24} and noise translation techniques~\cite{DBLP:journals/corr/abs-2412-04727} attempt to map OOD noise into a known or ID domain. However, such methods may suffer from scalability limitations or require prior exposure to real-world noise distributions. MID~\cite{DBLP:journals/corr/abs-2503-21377} presents another strategy by fine-tuning a denoiser trained solely on simple synthetic noise using self-supervised objectives. While effective, this approach relies on ensemble inference, making it computationally intensive.

In contrast to the mentioned schemes that primarily focus on data augmentation, objective design, or adaptation techniques, this work takes a fundamentally different perspective, \emph{i.e.}, how \emph{network architecture itself} can serve as an inductive bias for robustness.

\section{Problem Analysis}
The canonical model for denoising~\cite{DBLP:conf/iccvw/LiangCSZGT21, DBLP:conf/cvpr/LiFXDRTG23}~\footnote{Without loss of generality, we adopt it through this work.} models an observed noisy image $Y$ as the sum of its underlying clean signal $X$ and an additive white Gaussian noise component $\lambda\cdot N$:
\begin{align}
Y=X+\lambda\cdot N, N_{ij}\sim \mathcal{N} (0,\mathrm {1}),
\end{align}
where $\lambda$ is a scalar representing a uniform noise level. 
While this model is widely adopted for training deep denoisers, it introduces a critical mismatch with real-world noise characteristics, which are rarely spatially uniform. In practice, due to sensor response variations and environmental influences (\emph{e.g.}, lighting, exposure, ISO), the noise level varies across spatial locations. Consequently, the scalar $\lambda$ should be replaced by a spatially varying map $\Lambda$, leading to a distributional shift between training and inference. This train-test discrepancy from uniform to non-uniform noise lies at the heart of poor OOD generalization. We identify a core reason for this degradation as feature entanglement. Conventional denoising networks tend to learn features that are tightly coupled to the magnitude of the noise. When the noise level becomes spatially varying, this coupling breaks down, leading to destabilized representations and degraded performance.
To achieve robustness under such conditions, the key is to decouple feature representations from the noise level map. We argue that this can be systematically addressed by enforcing scale equivariance in the network design. Also known as first-order homogeneity, scale equivariance ensures that the network’s output scales linearly with the input, formally expressed as follows:
\begin{definition}[Scale Equivariance]
\label{def:homogeneity}
A function $\mathcal{H}$ obeying the $\mathcal{H}(0)=0$ is said to be first-order homogeneous if, for input $I$ and a scalar $k > 0$, it satisfies the following condition:
\[
\mathcal{H}(k\cdot I) = k\cdot \mathcal{H}(I).
\]
\end{definition}

\noindent For a denoising network $\mathcal{F}$ endowed with the above property, its behavior during training under a uniform noise level $\lambda$ can be described as follows: 
\begin{align}
\mathcal{F} (X+\lambda\cdot N)=\lambda\cdot \mathcal{F} (\frac{X}{\lambda }+N)\quad \stackrel{\text{target}}{\longrightarrow}\quad {X}.
\label{train}
\end{align}
As shown in Eq.~\eqref{train}, the network no longer learns a direct mapping from a noisy image to its clean counterpart. Instead, the effective learning task becomes denoising the input $X/\lambda+N$ to produce $X/\lambda$, thereby removing only the additive noise $N$. This formulation decouples the denoising mechanism from the global noise magnitude, embedding a scale-equivariant inductive bias directly into the network architecture. This decoupling extends naturally to the more challenging OOD setting with spatially variant noise $\Lambda$:
\begin{align}
\mathcal{F}(X+\Lambda \odot  \tilde{N})=\Lambda \odot \mathcal{F}(\frac{X}{\Lambda}+\tilde{N})\stackrel{\text{target}}{\longrightarrow} {X},
\label{test}
\end{align}
where $\odot$ denotes Hadamard product and $\tilde{N}$ is a base noise term that may originate from a different distribution. 
Remarkably, the network’s internal processing remains structurally consistent: regardless of whether the noise level is uniform or spatially varying, the effective input is normalized to unit noise variance across the image. This normalization aligns with recent observations~\cite{DBLP:conf/cvpr/ChenG0MDWPZ23, DBLP:journals/corr/abs-2403-15132} that denoisers inherently prioritize learning the noise pattern over image content.
By enforcing a consistent computational pattern across domains, scale equivariance provides a principled and architecture-level path to achieving robustness.

\begin{figure*}[t]
\centering
\includegraphics[scale=0.61]{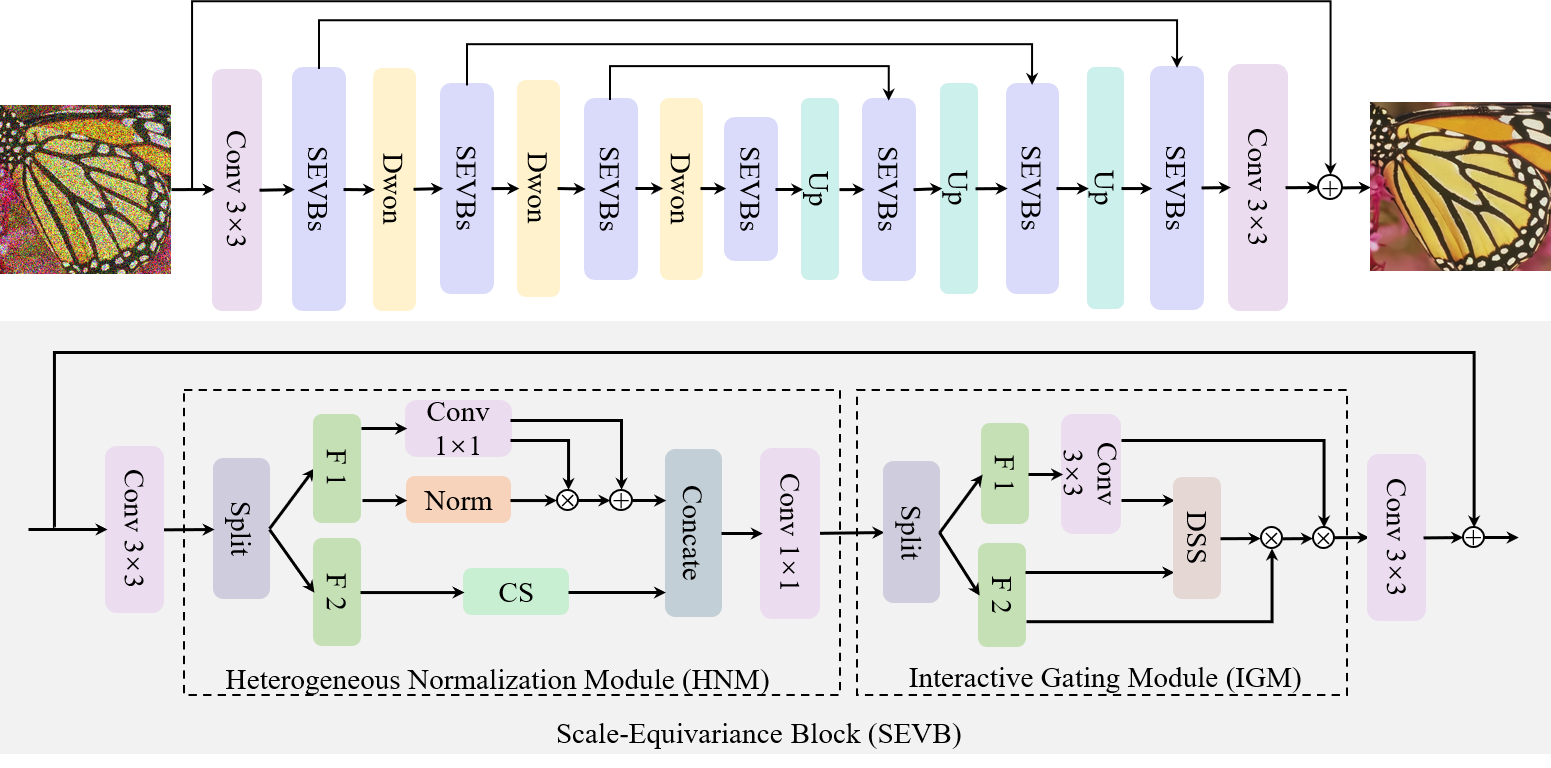}
\caption{Overview of the proposed SEVNet with scale-equivariant design. CS represents Constant Scaling. DSS involves the dual-signal scaling term. The split operation divides the feature map into equally shaped sub-maps along the channel dimension.}

\vspace{-15pt}
\label{fig:framework}
\end{figure*}

The importance of strict first-order homogeneity is best understood by examining common violations, which we term architectural pathologies for this task: \textit{1) Zero-Order Homogeneity (Scale Independence).} Consider a network with this property,~\emph{i.e.}, $\mathcal{F} (X+\lambda\cdot N)=\mathcal{F} (X/\lambda+N)$, its learning objective becomes mapping $X/\lambda + N$ to $X$. This imposes a flawed dual task: the network must simultaneously denoise the input and learn the global scaling $\lambda$, which forces an entanglement between features and the noise level seen during training. This entanglement is precisely what fails when $\lambda$ becomes a spatially-varying map; \textit{2) Higher-Order Homogeneity.} A network with second-order homogeneity, ~\emph{i.e.}, $\mathcal{F} (X+\lambda \cdot N)=\lambda^2 \cdot \mathcal{F} (X/\lambda+N)$ involves similar entanglement phenomena. The objective becomes mapping $X/\lambda + N$ to $X/\lambda^2$, creating a distorted target. Moreover, the second-order property involves a quadratic growth in feature variance,  increasing risk of learning collapse;  \textit{3) Exponential Activations.} Many exponential-based activation functions like GELU break homogeneity. They introduce input-dependent scaling errors that compound throughout the network, rendering OOD performance fragile.

\noindent \textbf{A First-Principles Validation.} With scale equivariance equipped, even a simple network should generalize well under distributional shifts.
To test this in blind denoising, we build a baseline U-Net only from demonstrably scale-equivariant components: standard convolutions and ReLU activation. Despite its simplicity and lack of advanced modules, this network serves as a clean testbed to observe the effect of scale equivariance without confounding factors.

\begin{lemma}[Homogeneity of Basic Operators]
\label{lem:core_ops}
Both convolution and ReLU are first-order homogeneous.
\end{lemma}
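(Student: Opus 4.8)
The plan is to verify Definition~\ref{def:homogeneity} directly for each operator, checking both the normalization condition $\mathcal{H}(0)=0$ and the scaling identity $\mathcal{H}(k\cdot I)=k\cdot\mathcal{H}(I)$ for $k>0$. Since convolution and ReLU are applied pointwise/linearly to feature tensors, this reduces to elementary algebra, so the main content is simply being careful about what ``convolution'' means here (i.e., a bias-free linear correlation with a learned kernel).

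First I would treat convolution. Write the (cross-)correlation output at location $p$ as $(\mathcal{H}(I))_p=\sum_{q}W_q\,I_{p+q}$, a finite linear combination with weights $W_q$ independent of the input. Linearity immediately gives $(\mathcal{H}(k\cdot I))_p=\sum_q W_q\,(k\,I_{p+q})=k\sum_q W_q\,I_{p+q}=k\,(\mathcal{H}(I))_p$, and plugging in $I=0$ yields $\mathcal{H}(0)=0$. I would explicitly note the assumption that the convolution carries no additive bias term (a bias would violate $\mathcal{H}(0)=0$); this is the one modelling caveat worth stating, since in practice one simply disables biases in the scale-equivariant layers.

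Next I would handle ReLU. Here $\mathcal{H}(I)=\max(I,0)$ applied entrywise. Fix an entry $x$ and a scalar $k>0$. If $x\ge 0$ then $kx\ge 0$, so $\max(kx,0)=kx=k\max(x,0)$; if $x<0$ then $kx<0$, so $\max(kx,0)=0=k\cdot 0=k\max(x,0)$. Hence the identity holds entrywise, and therefore for the whole tensor; taking $x=0$ gives $\mathcal{H}(0)=0$. The crucial point is the restriction $k>0$, which ensures $\mathrm{sign}(kx)=\mathrm{sign}(x)$ so the two branches of $\max$ are not swapped — this is exactly why negative scalars are excluded in Definition~\ref{def:homogeneity}.

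I do not anticipate a genuine obstacle; the only subtlety is being explicit about hypotheses, namely the no-bias assumption for convolution and the positivity $k>0$ for ReLU (without which ReLU would only be positively homogeneous, not homogeneous). If the paper later composes these operators, I might also remark in passing that first-order homogeneity is closed under composition and under sums of homogeneous maps, so the U-Net built from these primitives inherits the property — but that is a corollary rather than part of this lemma's proof.
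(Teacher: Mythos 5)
Your proof is correct and follows essentially the same direct-verification route as the paper: linearity of the (bias-free) convolution for the first claim, and positive homogeneity of $\max(\cdot,0)$ for the second. You are somewhat more careful than the paper's own proof in spelling out the no-bias assumption for convolution and the sign case analysis for ReLU with $k>0$, but these are refinements of the same argument rather than a different approach.
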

\begin{proof} 
Given an input $I$ and a scalar $k > 0$, the convolution output at position $(i,j)$ is:
\begin{equation}
\nonumber
\begin{aligned}
\text{Conv}(k \cdot I)_{i,j} = k \cdot \sum_{m,n} I_{i-m, j-n} \cdot W_{m,n} = k \cdot \text{Conv}(I)_{i,j}.
\end{aligned}
\end{equation}
Similarly, the relationship $\text{ReLU}(k\cdot I)_{i,j}=k \cdot \text{ReLU}(I)_{i,j}$ also holds. Thus, the claim is established.
\end{proof}
\noindent Building on this, the entire residual block, including the skip connection, preserves this property as well.

\begin{lemma}[Homogeneity of Residual Block]
\label{lem:rb_homogeneity}
The `Conv-ReLU-Conv' residual block is first-order homogeneous.
\end{lemma}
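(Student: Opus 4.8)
The plan is to decompose the residual block into its constituent operations and then invoke two elementary closure properties of first-order homogeneous maps: closure under composition and closure under addition. Writing the block as $\mathcal{B}(I) = I + \mathrm{Conv}_2\bigl(\mathrm{ReLU}(\mathrm{Conv}_1(I))\bigr)$, I would first treat the residual branch $\mathcal{G} := \mathrm{Conv}_2 \circ \mathrm{ReLU} \circ \mathrm{Conv}_1$ and the skip connection separately, and only combine them at the end.

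First I would establish that a composition of first-order homogeneous maps is again first-order homogeneous: if $\mathcal{H}_1(k\cdot I) = k\cdot\mathcal{H}_1(I)$ and $\mathcal{H}_2(k\cdot J) = k\cdot\mathcal{H}_2(J)$ for every $k>0$, then $\mathcal{H}_2(\mathcal{H}_1(k\cdot I)) = \mathcal{H}_2(k\cdot \mathcal{H}_1(I)) = k\cdot \mathcal{H}_2(\mathcal{H}_1(I))$, while $\mathcal{H}_2(\mathcal{H}_1(0)) = \mathcal{H}_2(0) = 0$. Applying this twice, using Lemma~\ref{lem:core_ops} to supply the property for each convolution and for $\mathrm{ReLU}$, yields $\mathcal{G}(k\cdot I) = k\cdot \mathcal{G}(I)$ and $\mathcal{G}(0)=0$.

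Next I would observe that the identity map is trivially first-order homogeneous, and that a sum of first-order homogeneous maps is first-order homogeneous: $(\mathcal{H}_1 + \mathcal{H}_2)(k\cdot I) = k\cdot\mathcal{H}_1(I) + k\cdot\mathcal{H}_2(I) = k\cdot(\mathcal{H}_1+\mathcal{H}_2)(I)$, with value $0$ at the input $0$. Taking $\mathcal{H}_1 = \mathrm{id}$ and $\mathcal{H}_2 = \mathcal{G}$ gives $\mathcal{B}(k\cdot I) = k\cdot\mathcal{B}(I)$ and $\mathcal{B}(0)=0$, which is exactly the asserted claim.

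The only real subtlety — and the point I would be careful to state — is the assumption that the convolutions carry no additive bias term (and that, if any normalization were present inside the block, it is of the scale-equivariant kind introduced for the CS operation). A nonzero bias $b$ breaks both $\mathcal{H}(0)=0$ and the scaling identity, since $\mathrm{Conv}(k\cdot I) + b \neq k\cdot(\mathrm{Conv}(I)+b)$ in general. I would therefore make explicit that the block is taken to be bias-free, consistent with the scale-equivariant design philosophy of the paper; beyond this remark, the argument is just a mechanical chaining of Lemma~\ref{lem:core_ops} through the composition and addition closure steps.
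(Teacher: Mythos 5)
Your proof is correct and takes essentially the same route as the paper's: the paper simply chains Lemma~\ref{lem:core_ops} through the composition $\mathrm{Conv}\circ\mathrm{ReLU}\circ\mathrm{Conv}$ and pulls $k$ out of the additive skip connection in a single displayed computation, which is exactly your closure-under-composition and closure-under-addition argument written out inline. Your explicit caveat that the convolutions must be bias-free is a worthwhile observation the paper leaves implicit, but it does not change the substance of the argument.
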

\begin{proof}
Given Lemma~\ref{lem:core_ops}, we have:
\begin{equation} \label{eq:rb_proof}
\nonumber
\begin{aligned}
\text{RB}(k\cdot I)&=\text{Conv}(\text{ReLU}(\text{Conv}(k\cdot I))) + k\cdot I \\
&  = k\cdot(\text{Conv}(\text{ReLU}(\text{Conv}(I))) + I)=k\cdot \text{RB}(I).
\end{aligned}
\end{equation}
The property holds for the entire block.
\end{proof}
\noindent As reported in Tab.~\ref{tab:preliminary}, this simple baseline outperforms those sophisticated but non-equivariant models on OOD tasks while remaining competitive on ID cases. The comparison corroborates our theoretical analysis. 

\begin{table}[!t]
\centering
\footnotesize
\setlength{\tabcolsep}{0.32mm}{

\begin{tabular}{lcccc} 
\toprule
& \multicolumn{1}{c}{\textbf{In-Distribution}} & \multicolumn{3}{c}{\textbf{Out-Of-Distribution}} \\ 
\cmidrule(lr){2-2} \cmidrule(lr){3-5} 
\textbf{Method} & \textbf{Gaussian} & \textbf{Speckle} & \textbf{Poisson} &\textbf{Mixture}\\
                & $\sigma=20$ & $\sigma=90$ & $\alpha=6$ &\\
\midrule

Uformer &\underline{32.89}/0.9450 &23.47/0.7510 &22.92/0.7253 &20.07/0.5812\\

NAFNet &32.28/0.9402 &22.69/0.7047 &22.30/0.6901 &20.15/0.5995\\
GRL &32.77/0.9419 &\underline{24.15}/\underline{0.8021} &\underline{23.65}/\underline{0.7868} &\underline{21.19}/\underline{0.6990}\\
CGNet &32.85/\underline{0.9454} &22.77/0.6999 &22.32/0.6829 &20.02/0.5850\\
Xformer	&\textbf{32.99}/\textbf{0.9456}	&21.80/0.6386	&21.15/0.6096	&17.89/0.4682\\
MHNet	&32.87/0.9452	&21.70/0.6474	&21.03/0.6150 &17.56/0.4552\\
AKDT	&32.54/0.9417 &22.58/0.6774 &22.08/0.6568 &19.52/0.5468\\
CRWKV &32.39/0.9377 &20.17/0.5845 &19.60/0.5594 &16.57/0.4299\\
Baseline &32.85/0.9447 &\textbf{24.49}/\textbf{0.8286} &\textbf{24.04}/\textbf{0.8183} &\textbf{21.67}/\textbf{0.7554}\\
\bottomrule
\end{tabular}}
\caption{Quantitative comparisons in PSNR/SSIM on the Urban100 dataset. The mixture noise is composed of Speckle and Poisson. The best results are highlighted in \textbf{bold}, while the second-best ones are \underline{underlined}.}
\label{tab:preliminary}
\vspace{-18pt}
\end{table}

\section{Designing a Scale-Equivariant Network}

Our first-principles validation demonstrated that even a simple equivariant network can outperform sophisticated yet non-compliant ones. To further advance performance, we now introduce several novel modules designed to solve key challenges in constructing a high-performance, scale-equivariant denoiser. These components collectively define our SEVNet architecture, as illustrated in Fig. \ref{fig:framework}. Due to the page limit, formal proofs of equivariance for all modules are provided in the appendix.

\noindent\textbf{Constant Scaling.} Modern networks often rely on normalization operations, \emph{e.g.}, Layer Norm, to stabilize training. While effective, these operations are scale-independent and thus conflict with our equivariant objective. To resolve this, we propose Constant Scaling (CS) as an equivariant stabilizer applied to the feature map $F\in \mathbb{R}^{c \times h \times w} $:

\begin{align}
\text{CS}(F)=\frac{F}{\sqrt[]{c} }\odot  {\eta}  ,
\end{align}
where $\eta \in \mathbb{R}^{c} $ is a learnable, per-channel parameter initialized to 1. This operation normalizes the initial feature magnitude (preventing uncontrolled variance growth), while allowing $\eta$ to adapt channel-wise amplitudes during training.

\begin{table*}[t]
\centering
\footnotesize
\setlength{\tabcolsep}{0.56mm}{
\begin{tabular}{lccccccc}
\toprule
& \multicolumn{1}{c}{\textbf{In-Distribution}} & \multicolumn{6}{c}{\textbf{Out-Of-Distribution}} \\ 
\cmidrule(lr){2-2} \cmidrule(lr){3-8} 
\textbf{Method} &\textbf{Gaussian} &\textbf{Speckle} & \textbf{Poisson}   &\textbf{Mixture}  &\textbf{Variant 1} &\textbf{Variant 2} &\textbf{Variant 3}\\

\midrule
\multicolumn{8}{c}{Kodak24} \\
\midrule

SwinIR~\cite{DBLP:conf/iccvw/LiangCSZGT21} & 33.50/0.9029 & 27.55/0.7931 & 27.23/0.7820 & 22.79/0.5887 & 25.99/0.7457 & 26.31/0.7541 & 25.63/0.7313 \\
Uformer~\cite{DBLP:conf/cvpr/WangCBZLL22} & 33.62/0.9062 & 27.20/0.7656 & 26.86/0.7506 & 22.27/0.5102 & 25.42/0.6950 & 25.82/0.7102 & 25.07/0.6750 \\
Restormer~\cite{DBLP:conf/cvpr/ZamirA0HK022}& \textbf{33.70}/\textbf{0.9076} & 26.12/0.6978 & 25.74/0.6807 & 20.11/0.4279 & 23.19/0.6078 & 24.01/0.6280 & 22.87/0.5809 \\
NAFNet~\cite{DBLP:conf/eccv/ChenCZS22}         & 33.42/0.9048 & 25.39/0.6681 & 25.15/0.6580 & 22.16/0.5422 & 23.49/0.6087 & 23.99/0.6227 & 23.33/0.5967 \\
GRL~\cite{DBLP:conf/cvpr/LiFXDRTG23}& 33.52/0.9025 & \underline{27.58}/\underline{0.7981} & \underline{27.29}/\underline{0.7904} & \underline{23.66}/\underline{0.6789} & \underline{26.19}/\underline{0.7643} & \underline{26.47}/\underline{0.7684} & \underline{25.87}/\underline{0.7560} \\
CGNet~\cite{ghasemabadi2024cascadedgaze} & 33.65/\underline{0.9075} & 25.43/0.6564 & 25.11/0.6416 & 21.74/0.4981 & 23.32/0.5953 & 23.88/0.6106 & 23.04/0.5714 \\
Xformer~\cite{DBLP:conf/iclr/ZhangZGDKY24}	&33.66/0.9069	&25.29/0.6388	 &24.82/0.6165	&18.91/0.3507	&22.56/0.5433	&23.15/0.5582	&22.10/0.5143 \\
MHNet~\cite{DBLP:journals/pr/GaoZYD25}	&33.64/0.9074	&24.88/0.6390	&24.43/0.6163	&18.67/0.3462	&21.91/0.5361	&22.72/0.5573	&21.64/0.5121\\
AKDT~\cite{DBLP:conf/visigrapp/BrateanuBAO25}	&33.48/0.9050	&25.57/0.6565	&25.20/0.6391	&20.94/0.4493	&23.39/0.5887	&23.94/0.6018	&23.05/0.5637\\
ESC~\cite{DBLP:journals/corr/abs-2503-06671}	&33.51/0.9028	&26.52/0.7017	&26.16/0.6843	&21.58/0.4691	 &24.44/0.6261	&24.94/0.6402	&24.03/0.6021 \\
CRWKV~\cite{DBLP:journals/corr/abs-2505-02705}	&33.50/0.9038	&23.89/0.5978	&23.43/0.5763	&17.68/0.3380	&20.73/0.5026 	&21.48/0.5186 	&20.26/0.4726 \\
Ours           & \underline{33.68}/0.9069 & \textbf{27.94}/\textbf{0.8164} & \textbf{27.66}/\textbf{0.8106} & \textbf{24.23}/\textbf{0.7368} & \textbf{26.64}/\textbf{0.7908} & \textbf{26.91}/\textbf{0.7932} & \textbf{26.34}/\textbf{0.7858} \\

\midrule
\multicolumn{8}{c}{Urban100} \\
\midrule

SwinIR~\cite{DBLP:conf/iccvw/LiangCSZGT21}	 &32.71/0.9415	& \underline{25.53}/0.8338	& 25.19/0.8238	 &20.78/0.6490	 &23.80/0.7858	&\underline{26.09}/0.8529 	 &23.75/0.7831\\
Uformer~\cite{DBLP:conf/cvpr/WangCBZLL22}	 &32.89/0.9450	& 25.03/0.8055	& 24.71/0.7936	 &20.07/0.5812	&23.11/0.7408	 &25.60/0.8331 	&23.10/0.7374\\
Restormer~\cite{DBLP:conf/cvpr/ZamirA0HK022}	&\textbf{33.12}/\textbf{0.9469}	& 24.31/0.7547	& 24.00/0.7439	 &18.56/0.5142	&21.65/0.6759	&24.27/0.7797	&21.64/0.6722\\
NAFNet~\cite{DBLP:conf/eccv/ChenCZS22}	&32.28/0.9402 	&23.87/0.7430	&23.64/0.7356	&20.15/0.5995	&22.05/0.6832	&22.44/0.7000	&22.04/0.6824\\
GRL~\cite{DBLP:conf/cvpr/LiFXDRTG23}	&32.77/0.9419 	& \underline{25.53}/\underline{0.8362}	& \underline{25.21}/\underline{0.8279}	&\underline{21.19}/\underline{0.6990}	&\underline{23.91}/\underline{0.7973}	&26.08/\underline{0.8541}	&\underline{23.85}/\underline{0.7952} \\
CGNet~\cite{ghasemabadi2024cascadedgaze}	&32.85/0.9454	&24.13/0.7449	&23.85/0.7361	&20.02/0.5850	&22.14/0.6835	 &22.56/0.7012 	 &22.09/0.6790\\
Xformer~\cite{DBLP:conf/iclr/ZhangZGDKY24}	&\underline{32.99}/\underline{0.9456}	&23.74/0.7138	&23.37/0.6996	&17.89/0.4682	&21.24/0.6326	 &21.69/0.6467	&21.19/0.6276\\
MHNet~\cite{DBLP:journals/pr/GaoZYD25}	&32.87/0.9452	&23.51/0.7224	&23.14/0.7073	&17.56/0.4552	&20.81/0.6290	&21.41/0.6505	&20.84/0.6262\\
AKDT~\cite{DBLP:conf/visigrapp/BrateanuBAO25}	&32.54/0.9417	&24.05/0.7304	&23.76/0.7197	&19.52/0.5468	&22.06/0.6697	&22.46/0.6841	&22.01/0.6657\\
ESC~\cite{DBLP:journals/corr/abs-2503-06671}	 &32.97/0.9439	&24.38/0.7463	&24.07/0.7351	&19.42/0.5445	&22.25/0.6793	&22.64/0.6917	&22.18/0.6747\\
CRWKV~\cite{DBLP:journals/corr/abs-2505-02705}	&32.39/0.9377	&22.00/0.6544	&21.67/0.6410	&16.57/0.4299	&19.42/0.5743	&19.94/0.5889	&19.34/0.5675\\
Ours	&32.94/0.9453 	&\textbf{26.14}/\textbf{0.8644}& \textbf{25.84}/\textbf{0.8590}	&\textbf{21.94}/\textbf{0.7698}	&\textbf{24.59}/\textbf{0.8371}	&\textbf{26.66}/\textbf{0.8771}	&\textbf{24.54}/\textbf{0.8357}\\
\bottomrule
\end{tabular}}
\caption{Quantitative denoising comparisons (PSNR/SSIM) on the different datasets. The average results across 3 noise levels for Speckle and Poisson noise are reported. The best results are marked in \textbf{bold}, while the second-best ones are \underline{underlined}.}
\label{tab:average_psnr_ssim_total}
\vspace{-19pt}
\end{table*}

\noindent\textbf{Normalized Self-Modulator.}  
While CS ensures global stability, it does not address local feature distortions caused by noise with spatial amplitude variations. To counteract this, we propose the Normalized Self-Modulator (NSM), a dynamic pixel-wise module designed to provide localized feature correction while preserving scale equivariance. The core idea is to perform a temporary local normalization, followed by an adaptive rescaling that restores critical magnitude in an equivariant manner. NSM performs self-modulation using affine parameters derived from the token's original pre-normalized state. This can be formulated as:

\begin{equation} 
\begin{aligned}
\left [ \gamma, \beta  \right ]&\leftarrow\text{Conv}_{1\times 1}(F), \\
\text{NSM}(F)&= \gamma \odot \frac{X-\mu(F) }{\sqrt{\sigma^2(F)}}+\beta, 
\end{aligned}    
\end{equation} 
 where $\mu$ and $\sigma^2$ are the per-token mean and variance, respectively. Since the affine parameters $\gamma \in\mathbb{R} ^{c\times h \times w}$ and $\beta \in\mathbb{R} ^{c\times h \times w}$ are derived via a linear transformation of the input, they are first-order homogeneous. This ensures that the entire NSM remains strictly scale-equivariant, providing adaptive feature correction without violating our principle.

\noindent\textbf{Heterogeneous Normalization Module.} While individual CS and NSM modules provide stabilization, they offer a trade-off between efficiency and expressive power. To get the best of both worlds, we combine them into a single Heterogeneous Normalization Module (HNM). The design is simple yet effective: we partition the feature channels and process one subset with the lightweight CS for baseline stability, while the other subset is processed by the more powerful NSM for dynamic, fine-grained correction. This heterogeneous strategy creates a robust, multi-level stabilization mechanism that achieves adaptive feature modulation.

\noindent\textbf{Interactive Gating Module.} Although ReLU satisfies the requirement of scale equivariance, its limited expressiveness may hinder performance. In contrast, more sophisticated activations such as GELU and Swish offer stronger nonlinear modeling capabilities but inherently violate scale equivariance, introducing a potential bottleneck in equivariant design. To bridge this gap, we propose the Interactive Gating Module (IGM), a novel activation function that is both highly expressive and strictly scale-equivariant by construction. Specifically, given the input feature map $F \in\mathbb{R} ^{c\times h \times w}$, we first split it along the channel dimension into two equal parts $F_1 \in\mathbb{R} ^{\frac{c}{2}\times h \times w}$ and $F_2 \in\mathbb{R} ^{\frac{c}{2}\times h \times w}$, and then feed them into a gating unit. We define the IGM as follows:
\begin{equation} 
\begin{aligned}
&[F_1,F_2]\leftarrow\text{Split}(F), \ 
F_{v}\leftarrow\text{Conv}_{3\times 3}(F_1),\ F_{g}\leftarrow F_2,\\
&\text{IG}(F_v,F_g)=\frac{F_v\odot F_g}{\sqrt[]{\sigma^2 (F_v)+\sigma^2 (F_g)} },
\label{IGM}
\end{aligned}
\end{equation}
where  $\sigma^2$ stands for the per-token variance. A brilliance of our IGM is the dual-signal scaling term, which not only ensures the entire module remains scale-equivariant but also enables the gating strength to be co-determined by both the feature $F_v$ and the gating signal $F_g$. This is achieved by transforming the gating signal's variance into a relative importance weight $\frac{\sigma^2(F_g)}{\sigma^2(F_v)+\sigma^2(F_g)}$. Beyond its mathematical compliance, the IGM is interactive. The denominator normalizes the output based on the combined energy (variance) of both the value and the gate. This prevents the output from exploding when both signals are strong, allowing the gate's influence to be modulated by its own magnitude relative to the value signal. This stable interaction enhances the network's feature selection capability beyond a simple ReLU, providing the desired equivariant non-linearity.

\section{Experiments}
Our SEVNet is built on the principle of scale equivariance. Together, these components create a network that is theoretically sound for OOD generalization. We now turn to the experimental results to verify these claims.

\begin{figure*}[!t]
    \centering
        \vspace{-7pt}
        \subfigure[Poisson Noise]{\includegraphics[width=\textwidth]{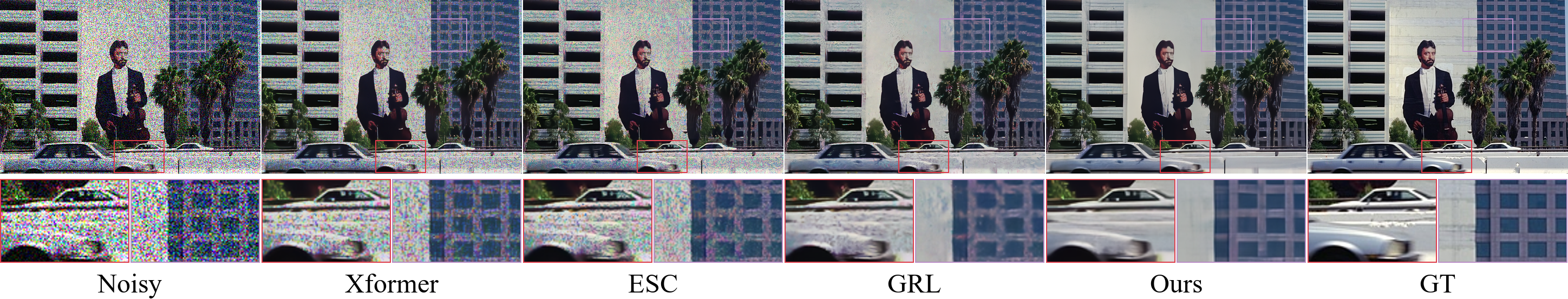}}
        \vspace{-7pt}
        \subfigure[Variant 2 of Speckle Noise]{\includegraphics[width=\textwidth]{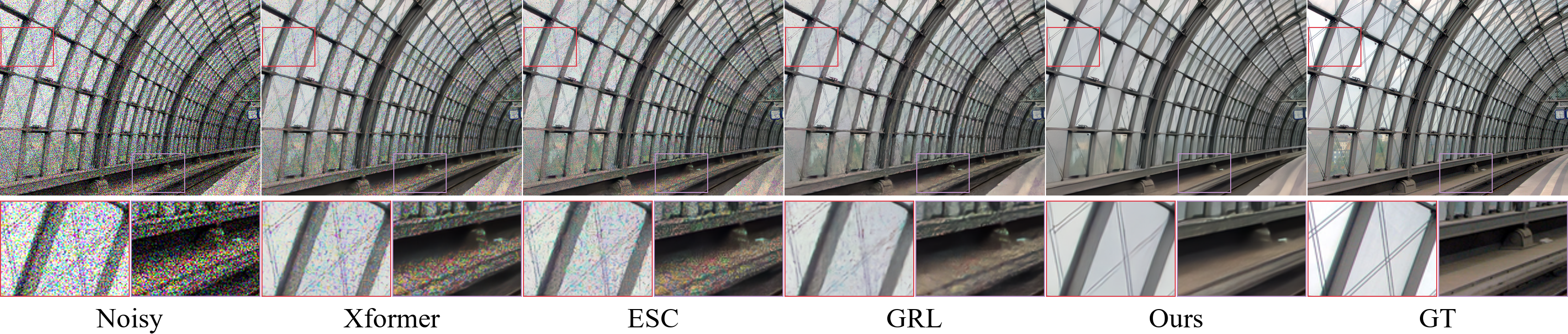}}
        \vspace{-7pt}
        \subfigure[Variant 3 of Speckle Noise]{\includegraphics[width=\textwidth]{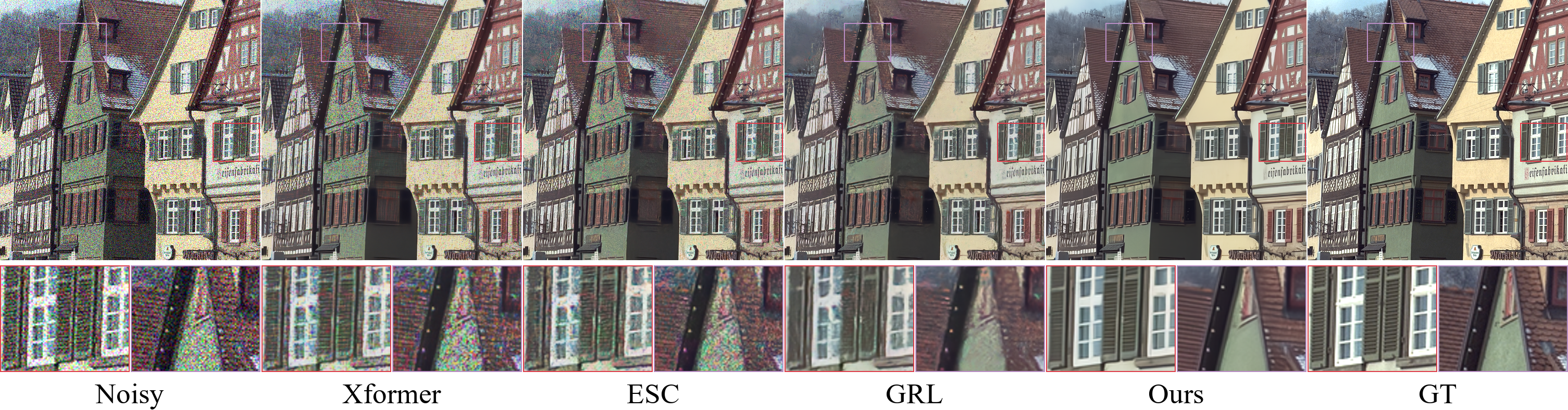}}
    \caption{Qualitative comparisons on OOD noises. Please refer to the appendix to see more visual results.}
    \vspace{-17pt}
    \label{fig:visual case} 
\end{figure*}

\begin{table*}[ht]
\centering
\footnotesize 
\setlength{\tabcolsep}{1.8mm}{
\begin{tabular}{lccccccccc}
\toprule 

    & \multicolumn{1}{c}{\textbf{Speckle}}
    & \multicolumn{1}{c}{\textbf{Poisson}} 
    & \multicolumn{1}{c}{\textbf{Mixture}}     
    & \multicolumn{1}{c}{\textbf{Variant 1}}
    & \multicolumn{1}{c}{\textbf{Variant 2}} 
    & \multicolumn{1}{c}{\textbf{Variant 3}}      \\ 

    & $\sigma=90$ 
    & $\alpha=60$
    & $(\sigma=90, \alpha=6)$
    & 
    & 
    & 
    \\ 
    \textbf{Setting}     & PSNR/ SSIM      & PSNR/SSIM  & PSNR/SSIM  & PSNR/SSIM    & PSNR/SSIM  & PSNR/SSIM\\ 
\midrule
\multicolumn{8}{c}{Heterogeneous Normalization Module (HNM)} \\
\midrule
(1a) Ours &\textbf{26.86}/\textbf{0.7933} &\textbf{26.42}/\textbf{0.7838}&\textbf{24.23}/\textbf{0.7368} &\textbf{26.64}/\textbf{0.7908} 	&\textbf{26.91}/\textbf{0.7932} &\textbf{26.34/}\textbf{0.7858}  \\
(1b) HNM$\to $ Full CS  &26.78/0.7888  &26.35/0.7793 &24.17/0.7317 &26.57/0.7858 &26.83/0.7887 &26.27/0.7812\\
(1c) w/ CS + w/o NSM   &26.74/0.7885 &26.30/0.7791 &24.16/0.7322 &26.52/0.7855 &26.79/0.7885 &26.23/0.7810\\

(1d) w/ HNM, IGM$\to $ELU &26.77/0.7894 &26.33/0.7799 &24.16/0.7325 &26.55/0.7866 &26.82/0.7893 &26.25/0.7818\\
(1e) w/ HNM, IGM$\to $GELU &26.74/0.7872 &26.29/0.7775 &24.13/0.7291 &26.52/0.7844 &26.79/0.7871 &26.23/0.7794\\
(1f) w/o HNM, IGM$\to$ELU &22.79/0.5859 &21.51/0.5205 &16.63/0.2782 &21.56/0.5565 &22.42/0.5896 &20.84/0.5186\\
(1g) w/o HNM, IGM$\to$GELU &25.73/0.7241 &25.17/0.7007 &22.41/0.5495 &25.40/0.7119 &25.74/0.7228 &24.97/0.6938\\
(1h) Scale-independent affine &26.54/0.7778  &26.09/0.7666 &23.85/0.7101 &26.32/0.7747  &26.59/0.7778 &26.01/0.7684 \\

\midrule
\multicolumn{8}{c}{Interactive Gating Module (IGM)} \\
\midrule
(2a) w/o IGM  &26.75/0.7887  &26.31/0.7789 &24.15/0.7317 &26.53/0.7858 &26.80/0.7886 &26.24/0.7811\\
(2b) IGM $\to $ ReLU &26.80/0.7892 &26.36/0.7797 &24.17/0.7317 &26.59/0.7865 &26.85/0.7891 &26.28/0.7814\\
(2c) Dual$\to $Single scaling &26.83/0.7916  &26.39/0.7821 &24.20/0.7347 &26.62/0.7891 &26.88/0.7916 &26.31/0.7840\\
(2d) HNM$\to$ LN, w/ IGM&26.39/0.7684 &25.90/0.7545 &23.46/0.6744  &26.13/0.7636  &26.41/0.7676 &25.83/0.7562\\
(2e) HNM$\to $LN, IGM$\to$ReLU&25.53/0.6979 &24.82/0.6612 &21.88/0.5057 &25.15/0.6898 &25.54/0.7034  &24.68/0.6640\\
(2f) HNM$\to$LN, IGM$\to$GELU &26.24/0.7583  &25.78/0.7426 &23.07/0.5991 &25.96/0.7485 &26.28/0.7576 &25.65/0.7388\\
\bottomrule
\end{tabular}
}
    \caption{Ablation studies on the Kodak24 dataset.}
    \label{tab:ablation}
\vspace{-17pt}
\end{table*}

\noindent \textbf{Datasets}
We train models on the DIV2K~\cite{DBLP:conf/cvpr/TimofteAG0ZLSKN17} dataset with 
Gaussian noise of various levels ($\sigma \in \left [5,20 \right ]$) for synthetic OOD noise removal, and ISP noise ~\cite{DBLP:conf/cvpr/GuoY0Z019} for handling real-world cases. We evaluate models on signal-dependent noise (Poisson, Speckle, and their mixtures). To enhance the non-uniformity of the level map, variants (Variant 1-3) of Speckle noise are synthesized by spatial functions~\cite{DBLP:journals/pami/YueYZZMW24}: sine-cosine, peaks, and Gaussian kernels. The evaluation is conducted on standard test sets (Kodak24, CBSD68, Urban100) and real-world benchmarks (CC, PolyU, HighISO)~\cite{DBLP:journals/corr/abs-2412-04727}. Denoising quality is quantified by the Peak Signal-to-Noise Ratio (PSNR) and the Structural Similarity Index (SSIM). 

\begin{table}[H]
\centering
\footnotesize 
\setlength{\tabcolsep}{1.4mm}{
\begin{tabular}{lccccccccc}
\toprule 
\textbf{Method} &  \textbf{CC} & \textbf{PolyU} & \textbf{HighISO} &\\ 
       &  PSNR/SSIM &  PSNR/SSIM  &  PSNR/SSIM\\
\midrule

SwinIR	&33.61/0.8583 &36.02/0.9151  &35.53/0.8797\\
Uformer	&34.30/0.8766  &36.18/0.9182  &35.96/0.8891\\
Restormer	&33.71/0.8576 &36.02/0.9146 &35.57/0.8788\\
NAFNet&34.88/0.8952 &36.30/0.9230  &36.38/0.9002\\
GRL	&33.92/0.8638  &36.14/0.9181   &35.75/0.8842 \\	
CGNet	&33.69/0.8580 &36.05/0.9155 &35.58/0.8802\\
Xformer     &33.65/0.8567   &36.01/0.9141  &35.54/0.8786 \\		
MHNet       &33.83/0.8634     &36.05/0.9156  &35.70/0.8826\\		
AKDT &34.48/0.8838 &36.17/0.9197  &36.05/0.8921\\	
ESC &34.08/0.8697  &36.30/0.9208 &35.94/0.8880\\
CRWKV     &31.13/0.7507   &31.98/0.7718  &31.66/0.7571\\
Ours+NT	&\textbf{36.95}/\textbf{0.9486} &\textbf{37.60}/\textbf{0.9557} &\textbf{39.36}/\textbf{0.9675}\\
\bottomrule
\end{tabular}
}
    \caption{Quantitative comparisons on real-world noise.}
    \label{tab:real-world}
\vspace{-17pt}
\end{table}

\noindent \textbf{Quantitative Comparison.}
As shown in Tab.~\ref{tab:average_psnr_ssim_total}, our scale-equivariant network exhibits superior OOD robustness without compromising ID performance. Despite incorporating advanced modules and popular components, these existing networks violate scale equivariance, leading to severe performance degradation in OOD scenarios. More results and efficiency comparisons can be found in the appendix.

\noindent \textbf{Qualitative Comparison.}
As shown in Fig.~\ref{fig:visual case}, even in severely degraded conditions, our network effectively removes various OOD noises while preserving clear edges and some textures. In contrast, the denoising results of competing methods are poor and exhibit different degradation patterns,~\emph{e.g.}, residual noise and artifacts. 

\noindent \textbf{Real-World Extension.}
The applicability of our scale-equivariant network, which assumes independent noise, can be successfully extended to real-world data by integrating a lightweight, SEVB-based noise translator~\cite{DBLP:journals/corr/abs-2412-04727}. The strong performance of this approach (Tab.~\ref{tab:real-world}), surpassing complex yet inefficient models, highlights a highly promising path for synthetic-to-real generalization that involves shifting the focus from building ever-more-complex denoisers to combining simple, principled networks, \textit{i.e.}, those that are scale-equivariant, with noise adaptation methods. 

\noindent \textbf{Ablation Study.}
In Tab.~\ref{tab:ablation}, we validate the effectiveness of the HNM and IGM, as well as the role of scale equivariance:

\noindent \textit{1) Impact of HNM}. Applying CS to the entire feature mapping (1b) or bypassing NSM (1c) leads to a performance drop, confirming the necessity of combining both components. Interestingly, HNM allows the network to maintain comparable generalization even when using exponential activations like ELU (1d) and GELU (1e). Without HNM, however, these activations (1f-1g) severely impair OOD capability, likely because HNM shifts features to intervals where the exponential effect is less pronounced. For the NSM's design, generating its affine parameters from the normalized features (1h), also degrades performance, though the decline is moderate since the NSM operates on only a portion of the features. Collectively, these results (1e-1h) demonstrate that scale equivariance is critical for robust denoising and that our HNM can partially suppress the adverse effects of violating this principle.

\begin{figure}[t]
\centering
\includegraphics[width=\linewidth]{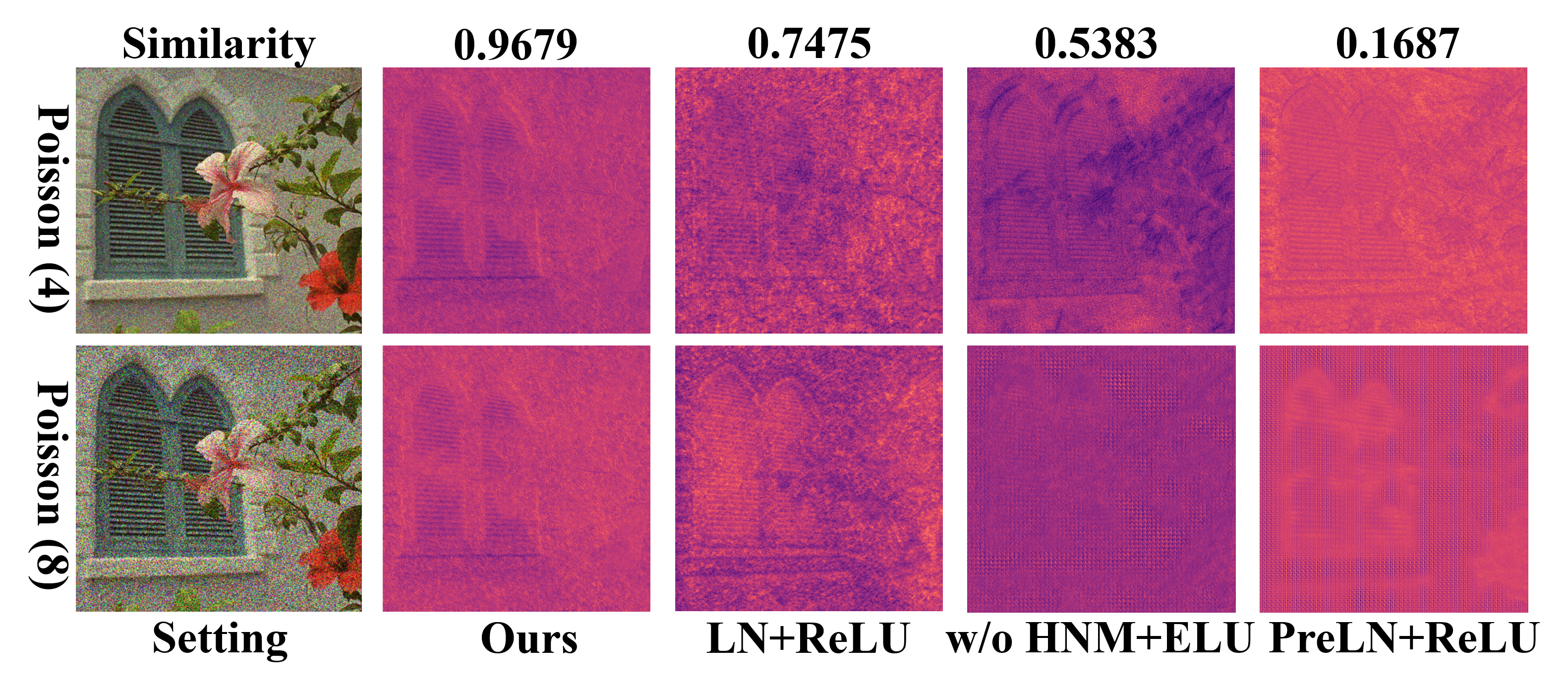}
\caption{Comparison in visual and cosine similarities of feature maps under different noise level maps.} 
\label{fig:feature analysis}
\vspace{-20pt}
\end{figure}

\noindent  \textit{2) Impact of IGM}. Removing IGM (2a) or replacing it with ReLU (2b) both yield suboptimal results, demonstrating IGM's superior nonlinear capabilities. compared with the scaling term obtained from gating (2c), the dual-signal scaling term further enhances OOD robustness through its feature selection effect. Under the scale-independent Layer Norm (LN) setting, our scaling term (2d) enables the network to perceive the scale but does not give accurate equivariance, thereby retaining weak generalization. This resilience is not found in standard activations like ReLU (2e) and GELU (2f). More importantly, without the scaling term, IGM becomes a second-order homogeneous operation, and the resulting quadratic growth in feature variance causes training to collapse, making it impossible to report the corresponding ablation results. Ultimately, these results (2d-2f) demonstrate that a scale-equivariant design is key for OOD denoising and that our IGM partially mitigates the performance degradation caused by scale-invariant operations.

\noindent \textbf{Feature Analysis.}
To further explain the advantage of scale equivariance, we visualize feature maps (Fig.~\ref{fig:feature analysis}) from the final decoder stage. Under vastly different corruption degrees, our scale-equivariant network produces remarkably consistent feature maps.  Conversely, designs that violate this principle heighten the network's sensitivity to variations in the level map. For instance, a network with scale-independence (middle and preceding Layer Norm) or exponential activation (ELU) yields dissimilar feature pairs. In short, Scale equivariance indeed forces the network to learn robust representations that are decoupled from the noise level map.

\section{Conclusion}
We have demonstrated that scale equivariance is a key architectural principle for OOD denoising, allowing networks to generalize from uniform to non-uniform noise level maps. Guided by this principle, we develop several network components. Specifically, HNM stabilizes and corrects feature distributions. IGM provides powerful nonlinear representations and refines the transmission of information flow. Our work not only delivers a superior OOD denoiser but also advocates for a new perspective in designing deep networks for robust image restoration.  Future work could explore this principle for more spatially non-uniform restoration tasks.

\section{Acknowledgement}
This work was supported by the National Natural Science Foundation of China under Grant nos.62372251 and 62072327. 
\bibliography{aaai2026}

\clearpage

\appendix
\section{Appendix}
\subsection{Theoretical Analysis}
Following the design principle of scale equivariance, we propose the Constant Scaling (CS), Normalized Self-Modulator (NSM), and Interactive Gating Module (IGM) to construct a more powerful OOD denoiser. The proofs of equivariance for the main steps within these network components are provided below. To facilitate the proofs, we uniformly represent the input as vectors, as the key operations are performed on pixel feature vectors.

\begin{lemma}[Homogeneity of CS]
\label{lem:cs_homogeneity}
The Constant Scaling operation is first-order homogeneous.
\end{lemma}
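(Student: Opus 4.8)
The plan is to observe that Constant Scaling is, aside from its dependence on the input, a fixed linear map with no additive term, so first-order homogeneity follows directly from Definition~\ref{def:homogeneity}. Following the vector convention adopted in this section, I would write a single pixel feature as $f \in \mathbb{R}^{c}$, so that $\mathrm{CS}(f) = \frac{1}{\sqrt{c}}\,(f \odot \eta)$, where both the channel count $c$ and the learnable vector $\eta \in \mathbb{R}^{c}$ are constants that do not depend on $f$.

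First I would verify the base condition $\mathrm{CS}(0)=0$: substituting the zero vector gives $\frac{1}{\sqrt{c}}(0 \odot \eta) = 0$. Next, for any scalar $k>0$ and any input $f$, I would compute
\[
\mathrm{CS}(k\cdot f) = \frac{1}{\sqrt{c}}\big((k\cdot f)\odot \eta\big) = k\cdot \frac{1}{\sqrt{c}}\big(f\odot \eta\big) = k\cdot \mathrm{CS}(f),
\]
where the middle equality uses that the Hadamard product is bilinear, so a scalar factor can be pulled out of either argument. Lifting this from a single pixel vector to the full tensor $F \in \mathbb{R}^{c\times h\times w}$ is immediate, since CS acts identically and independently at every spatial location, which completes the argument.

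There is essentially no hard step here; the only point worth flagging is a modeling convention rather than a mathematical difficulty. Because $\eta$ is a learnable parameter, one must treat it (like the network weights) as fixed with respect to the input when stating homogeneity, so that the scalar $k$ multiplies only the data $F$ and not $\eta$ or $c$. Granting that, CS is a fixed diagonal linear operator, and every such operator is trivially first-order homogeneous, which is exactly what is needed to slot CS into the equivariance proofs for HNM and the rest of SEVNet.
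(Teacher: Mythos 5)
Your proof is correct and follows essentially the same route as the paper's: treat $\eta$ and $c$ as input-independent constants and pull the scalar $k$ through the Hadamard product, giving $\mathrm{CS}(k\cdot\mathbf{v}) = k\cdot\mathrm{CS}(\mathbf{v})$. Your additional checks (the base condition $\mathrm{CS}(0)=0$ and the lift from a single pixel vector to the full tensor) are harmless elaborations of the same one-line argument.
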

\begin{proof}
Given an input $\mathbf{v} \in \mathbb{R}^c  $ and a scalar $k > 0$,  we need to show that $\text{CS}(k \cdot \mathbf{v}) = k \cdot \text{CS}(\mathbf{v})$.

Since the learnable parameter $\eta \in \mathbb{R}^c $ is independent of the input $\mathbf{v}$, it remains unchanged when the input is scaled. We proceed as follows:
\begin{align}
\text{CS}(k \cdot \mathbf{v}) &= \frac{k \cdot \mathbf{v}}{\sqrt{c}} \odot \eta \nonumber= k \cdot \text{CS}(\mathbf{v}). \nonumber
\end{align}
This completes the proof.
\end{proof}

\begin{lemma}[Homogeneity of NSM]
\label{lem:nsm_homogeneity}
The Normalized Self-Modulator (NSM) operation is first-order homogeneous.
\end{lemma}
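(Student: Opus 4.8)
The plan is to show directly that $\text{NSM}(k\cdot\mathbf{v}) = k\cdot\text{NSM}(\mathbf{v})$ for any scalar $k>0$, by tracking how each ingredient of the NSM formula responds to scaling the input. The key observation is that all three pieces — the affine parameters $\gamma,\beta$, the per-token mean $\mu$, and the per-token standard deviation $\sqrt{\sigma^2}$ — are themselves first-order homogeneous, and that the normalized quantity $(\mathbf{v}-\mu(\mathbf{v}))/\sqrt{\sigma^2(\mathbf{v})}$ is therefore scale-\emph{invariant}, leaving the overall first-order scaling to be carried entirely by $\gamma$ (on the normalized term) and by $\beta$ (on its own).

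Concretely, I would proceed in the following order. First, note that $[\gamma,\beta]\leftarrow\text{Conv}_{1\times 1}(\mathbf{v})$ is a linear map with no bias (or, if a bias is present, it is absorbed consistently), so by Lemma~\ref{lem:core_ops} it is first-order homogeneous: $\gamma(k\cdot\mathbf{v}) = k\cdot\gamma(\mathbf{v})$ and $\beta(k\cdot\mathbf{v}) = k\cdot\beta(\mathbf{v})$. Second, since $\mu(\mathbf{v})$ is an average of the entries of $\mathbf{v}$, it is linear in $\mathbf{v}$, hence $\mu(k\cdot\mathbf{v}) = k\cdot\mu(\mathbf{v})$; consequently $k\cdot\mathbf{v} - \mu(k\cdot\mathbf{v}) = k\cdot(\mathbf{v}-\mu(\mathbf{v}))$. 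Third, the per-token variance scales quadratically, $\sigma^2(k\cdot\mathbf{v}) = k^2\cdot\sigma^2(\mathbf{v})$ (for $k>0$), so $\sqrt{\sigma^2(k\cdot\mathbf{v})} = k\cdot\sqrt{\sigma^2(\mathbf{v})}$. Putting these together, the normalized vector is invariant,
\[
\frac{k\cdot\mathbf{v} - \mu(k\cdot\mathbf{v})}{\sqrt{\sigma^2(k\cdot\mathbf{v})}} = \frac{k\cdot(\mathbf{v}-\mu(\mathbf{v}))}{k\cdot\sqrt{\sigma^2(\mathbf{v})}} = \frac{\mathbf{v}-\mu(\mathbf{v})}{\sqrt{\sigma^2(\mathbf{v})}},
\]
and therefore
\[
\text{NSM}(k\cdot\mathbf{v}) = \gamma(k\cdot\mathbf{v})\odot\frac{\mathbf{v}-\mu(\mathbf{v})}{\sqrt{\sigma^2(\mathbf{v})}} + \beta(k\cdot\mathbf{v}) = k\Bigl(\gamma(\mathbf{v})\odot\frac{\mathbf{v}-\mu(\mathbf{v})}{\sqrt{\sigma^2(\mathbf{v})}} + \beta(\mathbf{v})\Bigr) = k\cdot\text{NSM}(\mathbf{v}),
\]
which is the claim. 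I would also briefly check the $\mathcal{H}(0)=0$ condition required by Definition~\ref{def:homogeneity}: at $\mathbf{v}=0$ the convolution outputs $\gamma=\beta=0$, and the normalized term is bounded (or defined via the usual $\epsilon$-regularization), so the product with $\gamma=0$ vanishes and $\text{NSM}(0)=0$.

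The only real subtlety — and the step I would be most careful about — is the normalization term: one must confirm that the $\mu$ and $\sigma^2$ used are computed \emph{per token} (i.e., over the channel dimension of the same pixel feature vector that is being rescaled), so that a global scalar $k$ factors cleanly out of both; and one must handle the numerical stabilizer $\epsilon$ inside the square root, since $\sqrt{\sigma^2(k\mathbf{v})+\epsilon} = \sqrt{k^2\sigma^2(\mathbf{v})+\epsilon} \neq k\sqrt{\sigma^2(\mathbf{v})+\epsilon}$ exactly. The clean statement holds for the idealized $\epsilon=0$ formula as written in the paper; for the implemented version the equivariance is approximate, with the error controlled by $\epsilon$ and vanishing as $\epsilon\to 0$. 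I would note this caveat explicitly but otherwise present the exact argument above, mirroring the style of the CS proof.
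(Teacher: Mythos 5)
Your proof is correct and follows essentially the same route as the paper's: it establishes first-order homogeneity of the affine parameters from the $1\times 1$ convolution, linearity of the mean, quadratic scaling of the variance, and hence scale-invariance of the normalized vector, before combining these facts. Your additional remarks on the $\mathcal{H}(0)=0$ condition and the $\epsilon$-stabilizer are sensible refinements the paper omits, but they do not change the argument.
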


\begin{proof}
Given an input $\mathbf{v} \in \mathbb{R}^c $ and a scalar $k > 0$, we need to show that $\text{NSM}(k \cdot \mathbf{v}) = k \cdot \text{NSM}(\mathbf{v})$.

First, we analyze the components:
\begin{itemize}
    \item \textbf{Affine Parameters:} The $1 \times 1$ convolution is a linear transformation and thus satisfies first-order homogeneity:
    \begin{equation}
\left [ k\cdot \beta, k\cdot \gamma \right ]\leftarrow \text{Conv}_{1 \times 1}(k\cdot \mathbf{v} ). \nonumber
    \end{equation}

    \item \textbf{Mean and Standard Deviation:} The mean and variance of the scaled vector $k \cdot \mathbf{v}$ are:
    \begin{align}
        \mu(k \cdot \mathbf{v}) &= \frac{1}{c}\sum_{i=1}^{c} (k \cdot \mathbf{v}_i) = k \cdot \mu(\mathbf{v}). \nonumber\\
        \sigma^2(k \cdot \mathbf{v}) &= \frac{1}{c}\sum_{i=1}^{c} (k \cdot \mathbf{v}_i - k \cdot \mu(\mathbf{v}))^2 \nonumber \nonumber\\
        &= \frac{k^2}{c}\sum_{i=1}^{c} (\mathbf{v}_i - \mu(\mathbf{v}))^2 = k^2 \cdot \sigma^2(\mathbf{v}).\nonumber
    \end{align}
    Therefore, the standard deviation is $\sigma(k \cdot \mathbf{v}) = k \cdot \sigma(\mathbf{v})$.\nonumber

    \item \textbf{Normalized Vector:} The normalized vector part of NSM is scale-independent (zero-order homogeneous):
    \begin{equation}
        \frac{k \cdot \mathbf{v} - \mu(k \cdot \mathbf{v})}{\sigma(k \cdot \mathbf{v})} = \frac{\mathbf{v} - \mu(\mathbf{v})}{\sigma(\mathbf{v})}.\nonumber
    \end{equation}
\end{itemize}
Now, we combine these results to analyze the full NSM operation:
\begin{align}
\text{NSM}(k \cdot \mathbf{v}) &= k \cdot \gamma \odot \frac{k \cdot \mathbf{v}-\mu(k \cdot \mathbf{v}) }{\sigma(k \cdot \mathbf{v})} + k \cdot \beta \nonumber \\
&= k \cdot \text{NSM}(\mathbf{v}).\nonumber
\end{align}
This completes the proof.
\end{proof}

\begin{lemma}[Homogeneity of IGM]
\label{lem:igm_homogeneity}
The Interactive Gating Module (IGM) is a first-order homogeneous system.
\end{lemma}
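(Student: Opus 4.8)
The plan is to decompose the IGM into its three constituent stages --- the channel split, the $3\times 3$ convolution applied to the first half $F_1$, and the interactive gating $\text{IG}(F_v,F_g)$ --- and to track how each stage transforms under the input scaling $F\mapsto k\cdot F$ with $k>0$. Since first-order homogeneity is preserved under composition, it suffices to verify each stage in turn and then recompose.

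First I would dispatch the two easy stages. The channel split is a pure re-indexing of coordinates, so $\text{Split}(k\cdot F)=[\,k\cdot F_1,\,k\cdot F_2\,]$, and in particular $F_g=F_2$ scales to $k\cdot F_g$. By Lemma~\ref{lem:core_ops} the convolution is first-order homogeneous, so $F_v=\text{Conv}_{3\times3}(F_1)$ scales to $k\cdot F_v$. Note that $F_v,F_g\in\mathbb{R}^{\frac{c}{2}\times h\times w}$ share the same shape, so the Hadamard product in the gate is well-defined. Consequently, scaling the module input by $k$ is equivalent to feeding $(k\cdot F_v,\,k\cdot F_g)$ into the gating unit, and the whole lemma reduces to the single identity $\text{IG}(k\cdot F_v,\,k\cdot F_g)=k\cdot\text{IG}(F_v,\,F_g)$.

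Next I would analyze the gating unit by treating numerator and denominator separately. The numerator $F_v\odot F_g$ is bilinear, hence picks up a factor $k^2$: $(k\cdot F_v)\odot(k\cdot F_g)=k^2\,(F_v\odot F_g)$. For the denominator, the per-token variance is quadratically homogeneous, $\sigma^2(k\cdot u)=k^2\,\sigma^2(u)$ --- exactly the computation already carried out in the proof of Lemma~\ref{lem:nsm_homogeneity} --- so $\sigma^2(k\cdot F_v)+\sigma^2(k\cdot F_g)=k^2\bigl(\sigma^2(F_v)+\sigma^2(F_g)\bigr)$, and taking the square root contributes a factor $k$ (here positivity of $k$ is used to pull it out of the root). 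Dividing, the $k^2$ in the numerator cancels one power of $k$ from the denominator, leaving precisely $k\cdot\text{IG}(F_v,F_g)$; one also checks $\text{IG}(0,0)=0$, so the $\mathcal{H}(0)=0$ clause of Definition~\ref{def:homogeneity} is satisfied.

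I expect the argument to be essentially routine; the only genuine subtlety is the interplay of orders. The numerator alone is second-order homogeneous --- the ``higher-order'' pathology flagged in the Problem Analysis --- and it is exactly the dual-signal scaling denominator $\sqrt{\sigma^2(F_v)+\sigma^2(F_g)}$ that demotes the composite back to first order. The one technical point to handle carefully is the degenerate case where $\sigma^2(F_v)+\sigma^2(F_g)=0$ (a constant token), where the normalization is ill-defined: I would dispatch this either via the customary $\epsilon$-stabilized variant of the denominator, or by observing that on such a token the numerator vanishes as well, so the claimed identity still holds through the $0\mapsto 0$ convention. Recomposing the three stages then gives that the full map $F\mapsto\text{IG}(\text{Conv}_{3\times3}(F_1),F_2)$ is first-order homogeneous, completing the proof.
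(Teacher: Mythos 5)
Your proof is correct and follows essentially the same route as the paper's: the numerator $F_v\odot F_g$ is second-order homogeneous, the dual-variance denominator contributes a single factor of $k$ under the square root (using $k>0$), and the quotient is therefore first-order homogeneous. You are in fact slightly more thorough than the paper, which explicitly omits the split and convolution stages and does not discuss the degenerate zero-variance token, so no changes are needed.
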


\begin{proof}
Given an input $\mathbf{v}\in \mathbb{R}^c $ and a scalar $k > 0$, we need to show that $\text{IGM}(k \cdot \mathbf{v}) = k \cdot \text{IGM}(\mathbf{v})$.


We will analyze the homogeneity of the numerator and the denominator of the IGM formula separately. Here, the convolution and split operations are omitted

\begin{itemize}
    \item \textbf{Numerator Analysis:} The numerator is a Hadamard product. Its homogeneity is determined as follows:
    \begin{equation}
        (k \cdot \mathbf{v}_v) \odot (k \cdot \mathbf{v}_g) = k^2 \cdot (\mathbf{v}_v \odot \mathbf{v}_g). \nonumber
    \end{equation}
    Thus, the numerator is second-order homogeneous.

    \item \textbf{Denominator Analysis:} The denominator is the square root of the sum of variances. The variance of a scaled vector $\sigma^2(k \cdot \mathbf{v})$ is second-order homogeneous. Applying this property to the denominator, we get:
    \begin{align}
        \sigma^2(k \cdot \mathbf{v}_v) + \sigma^2(k \cdot \mathbf{v}_g)  
        &= k^2 \cdot \left (  \sigma^2(\mathbf{v}_v) + \sigma^2(\mathbf{v}_g)\right ) . \nonumber
    \end{align}
\end{itemize}

Combining the results for the numerator and the denominator, we analyze the full IGM operation:
\begin{align}
\text{IGM}(k \cdot \mathbf{v}_v, k \cdot \mathbf{v}_g) &= \frac{(k \cdot \mathbf{v}_v) \odot (k \cdot \mathbf{v}_g)}{\sqrt{\sigma^2(k \cdot \mathbf{v}_v) + \sigma^2(k \cdot \mathbf{v}_g)}} \nonumber \\
&= k \cdot \text{IGM}(\mathbf{v}_v, \mathbf{v}_g). \nonumber
\end{align}
This completes the proof.
\end{proof}


\subsection{Experiments}

\begin{table*}[!t]
\centering
\footnotesize
\setlength{\tabcolsep}{2mm}{
\begin{tabular}{lcccccc}
\toprule
& \multicolumn{3}{c}{\textbf{Speckle Noise}} & \multicolumn{3}{c}{\textbf{Poisson Noise}} \\ 
\cmidrule(lr){2-4} \cmidrule(lr){5-7} 
\textbf{Method}&$\sigma=60$ &$\sigma=80$ &$\sigma=90$ & $\alpha=4$   &$\alpha=5$  &$\alpha=6$\\
\midrule
\multicolumn{7}{c}{Kodak24} \\
\midrule
SADNet~\cite{DBLP:conf/eccv/ChangLFX20}	&28.58/0.8236	&26.76/0.7703	&25.97/0.7409	&28.20/0.8138 	&26.77/0.7711	&25.53/0.7227\\
SwinIR~\cite{DBLP:conf/iccvw/LiangCSZGT21}	&\underline{29.11}/\underline{0.8368}	&27.22/0.7866	&26.33/0.7560	&\underline{28.71}/\underline{0.8272} 	&27.21/0.7859	&25.78/0.7328 \\
Uformer~\cite{DBLP:conf/cvpr/WangCBZLL22}	&28.97/0.8326	&26.83/0.7567	&25.81/0.7075	&28.53/0.8202	&26.82/0.7563	&25.22/0.6752 \\
Restormer~\cite{DBLP:conf/cvpr/ZamirA0HK022}	&28.21/0.7837 	&25.71/0.6839	&24.45/0.6257	&27.74/0.7670	 &25.75/0.6847	&23.74/0.5904\\
NAFNet~\cite{DBLP:conf/eccv/ChenCZS22}	&26.75/0.7182	&25.06/0.6563	&24.36/0.6297 	 &26.38/0.7040	&25.07/0.6557	&23.99/0.6144 \\
GRL~\cite{DBLP:conf/cvpr/LiFXDRTG23}	&29.08/0.8360	&\underline{27.23}/\underline{0.7900}	&\underline{26.42}/\underline{0.7682}	&28.68/0.8265	&\underline{27.22}/\underline{0.7897}	&\underline{25.96}/\underline{0.7549}\\
CGNet~\cite{ghasemabadi2024cascadedgaze}	&27.08/0.7213	&25.02/0.6411 	&24.19/0.6068 	&26.61/0.7017 	 &25.01/0.6383 	&23.72/0.5848 \\
Xformer~\cite{DBLP:conf/iclr/ZhangZGDKY24}	&27.84/0.7554	&24.74/0.6145	&23.28/0.5466	&27.25/0.7298	&24.74/0.6132 	&22.46/0.5065 \\
MHNet~\cite{DBLP:journals/pr/GaoZYD25}	&27.11/0.7399	&24.43/0.6212	&23.10/0.5559	 &26.56/0.7157	&24.42/0.6183	&22.32/0.5149\\
AKDT~\cite{DBLP:conf/visigrapp/BrateanuBAO25}	&27.49/0.7389	&25.12/0.6382 	&24.09/0.5923 	&26.99/0.7175	&25.11/0.6362 	&23.51/0.5637\\
ESC~\cite{DBLP:journals/corr/abs-2503-06671}	&28.46/0.7851	&26.08/0.6850	&25.01/0.6351 	&27.98/0.7656	&26.10/0.6834	&24.41/0.6040\\
CRWKV~\cite{DBLP:journals/corr/abs-2505-02705}	&26.50/0.7053	&23.29/0.5742 	&21.87/0.5139	&25.86/0.6785	&23.33/0.5721	&21.11/0.4782\\
Ours	 &\textbf{29.35}/\textbf{0.8463}	&\textbf{27.61}/\textbf{0.8096} 	&\textbf{26.86}/\textbf{0.7933}	&\textbf{28.97}/\textbf{0.8385}	&\textbf{27.60}/\textbf{0.8094}	&\textbf{26.42}/\textbf{0.7838}\\

\midrule
\multicolumn{7}{c}{Urban100} \\
\midrule
SADNet~\cite{DBLP:conf/eccv/ChangLFX20}	&26.06/0.8468	&23.95/0.7910	&23.08/0.7617	&25.68/0.8380	 &24.03/0.7932	&22.67/0.7458\\

SwinIR~\cite{DBLP:conf/iccvw/LiangCSZGT21}	&\underline{27.37}/\underline{0.8794} 	&\underline{25.10}/0.8264	 &24.11/0.7955	&\underline{26.90}/\underline{0.8699}	&\underline{25.11}/0.8266 	 &23.56/0.7750 \\
Uformer~\cite{DBLP:conf/cvpr/WangCBZLL22}	&27.05/0.8699	&24.56/0.7956	&23.47/0.7510	&26.58/0.8583	&24.62/0.7971	&22.92/0.7253\\
Restormer~\cite{DBLP:conf/cvpr/ZamirA0HK022}	&26.54/0.8294	&23.82/0.7419	&22.56/0.6927	&26.08/0.8172	&23.94/0.7470	&21.98/0.6675\\
NAFNet~\cite{DBLP:conf/eccv/ChenCZS22}	 &25.43/0.7915 	&23.49/0.7328	&22.69/0.7047	&25.07/0.7817	&23.55/0.7351	&22.30/0.6901 \\
GRL~\cite{DBLP:conf/cvpr/LiFXDRTG23}	&27.34/0.8787	&25.09/\underline{0.8277}	 &\underline{24.15}/\underline{0.8021}	&26.87/0.8690	&\underline{25.11}/\underline{0.8279}	&\underline{23.65}/\underline{0.7868}\\
CGNet~\cite{ghasemabadi2024cascadedgaze}	&25.92/0.8019	&23.69/0.7330	&22.77/0.6999	&25.49/0.7901	 &23.75/0.7352	&22.32/0.6829 \\
Xformer~\cite{DBLP:conf/iclr/ZhangZGDKY24}	&26.26/0.8078	&23.16/0.6950	&21.80/0.6386	&25.71/0.7906	&23.26/0.6987	&21.15/0.6096\\
MHNet~\cite{DBLP:journals/pr/GaoZYD25}	&25.81/0.8121	 &23.01/0.7076	&21.70/0.6474	&25.30/0.7961	&23.09/0.7108 	&21.03/0.6150\\
AKDT~\cite{DBLP:conf/visigrapp/BrateanuBAO25}	&25.98/0.7973	&23.59/0.7164	&22.58/0.6774	&25.53/0.7835	&23.66/0.7187	&22.08/0.6568\\
ESC~\cite{DBLP:journals/corr/abs-2503-06671}	&26.53/0.8206 	&23.86/0.7307	&22.74/0.6876	&26.05/0.8059 	&23.96/0.7338	&22.21/0.6655\\
CRWKV~\cite{DBLP:journals/corr/abs-2505-02705}	&24.41/0.7439	 &21.42/0.6347	&20.17/0.5845	&23.88/0.7253	&21.54/0.6383	&19.60/0.5594\\
Ours	&\textbf{27.83}/\textbf{0.8944}	 &\textbf{25.74}/\textbf{0.8584}	&\textbf{24.84}/\textbf{0.8405}	&\textbf{27.40}/\textbf{0.8877}	&\textbf{25.76}/\textbf{0.8589}	&\textbf{24.36}/\textbf{0.8305}\\

\midrule
\multicolumn{7}{c}{CBSD68} \\
\midrule
SADNet~\cite{DBLP:conf/eccv/ChangLFX20}	&27.02/0.8109	&25.24/0.7532	&24.50/0.7243	&26.70/0.8011	&25.31/0.7556	 &24.15/0.7093\\
SwinIR~\cite{DBLP:conf/iccvw/LiangCSZGT21}	&\underline{28.05}/\underline{0.8367}	&\underline{26.21}/\underline{0.7864}	&25.36/0.7580	&\underline{27.67}/\underline{0.8272}	&\underline{26.21}/\underline{0.7861}	&24.86/0.7376\\
Uformer~\cite{DBLP:conf/cvpr/WangCBZLL22}	&27.74/0.8274 	&25.67/0.7559	&24.72/0.7124	&27.33/0.8145	&25.69/0.7552	&24.19/0.6843\\
Restormer~\cite{DBLP:conf/cvpr/ZamirA0HK022}	&26.74/0.7652	&24.32/0.6738	&23.19/0.6258	&26.28/0.7488	 &24.38/0.6757	&22.59/0.5984\\
NAFNet~\cite{DBLP:conf/eccv/ChenCZS22}	 &25.04/0.6973	&23.41/0.6347	&22.77/0.6084	&24.66/0.6823	&23.40/0.6330 	&22.41/0.5917\\
GRL~\cite{DBLP:conf/cvpr/LiFXDRTG23}	&28.00/0.8344	&26.18/0.7848 	&\underline{25.40}/\underline{0.7614}	&27.62/0.8245	&26.19/0.7849	&\underline{24.97}/\underline{0.7476} \\
CGNet~\cite{ghasemabadi2024cascadedgaze}	&25.81/0.7191	&23.83/0.6454 	&23.03/0.6132	&25.39/0.7032	&23.85/0.6450	&22.61/0.5943\\
Xformer~\cite{DBLP:conf/iclr/ZhangZGDKY24}	&26.52/0.7474	&23.66/0.6260	&22.37/0.5674	 &25.96/0.7251	&23.68/0.6262	&21.66/0.5336\\
MHNet~\cite{DBLP:journals/pr/GaoZYD25}	&25.72/0.7292	&23.20/0.6241	 &23.20/0.6241	&25.22/0.7085	 &23.22/0.6234 	&21.29/0.5317\\
AKDT~\cite{DBLP:conf/visigrapp/BrateanuBAO25}	&26.09/0.7295	&23.82/0.6379	&22.87/0.5962	&25.62/0.7108	&23.84/0.6377 	&22.36/0.5721\\
ESC~\cite{DBLP:journals/corr/abs-2503-06671}	&27.13/0.7766	&24.93/0.6908	&23.96/0.6488	&26.70/0.7598	&24.96/0.6902	&23.43/0.6229\\
CRWKV~\cite{DBLP:journals/corr/abs-2505-02705}	&24.17/0.6607	&21.14/0.5363	&19.89/0.4813	&23.49/0.6323	&21.11/0.5322 	&19.17/0.4461\\
Ours	&\textbf{28.17}/\textbf{0.8407}	&\textbf{26.44}/\textbf{0.7978}	&\textbf{25.70}/\textbf{0.7786}	&\textbf{27.81}/\textbf{0.8320}	&\textbf{26.46}/\textbf{0.7980}	&\textbf{25.31}/\textbf{0.7679}\\

\bottomrule
\end{tabular}}
\caption{Quantitative denoising comparisons (PSNR/SSIM) on the different datasets. The best results are marked in \textbf{bold}, while the second-best ones are \underline{underlined}.}
\label{tab:psnr_ssim_total}
\vspace{-20pt}
\end{table*}

\begin{table*}[!t]
\centering
\footnotesize
\setlength{\tabcolsep}{2mm}{
\begin{tabular}{lcccccc}
\toprule
& \multicolumn{3}{c}{\textbf{Speckle Noise*}} & \multicolumn{3}{c}{\textbf{Speckle Noise variant*}} \\ 
\cmidrule(lr){2-4} \cmidrule(lr){5-7} 
\textbf{Method}&$\sigma=60$ &$\sigma=80$ &$\sigma=90$ & Variant 1  &Variant 2  &Variant 3\\
\midrule
\multicolumn{7}{c}{Kodak24} \\
\midrule
SADNet~\cite{DBLP:conf/eccv/ChangLFX20}	&26.83/0.7717	&25.15/0.7064	&24.41/0.6701	&24.23/0.6663	&24.45/0.6730 	&23.92/0.6471 \\
SwinIR~\cite{DBLP:conf/iccvw/LiangCSZGT21}	&\underline{27.36}/\underline{0.7871}	&25.40/0.7147 	&24.49/0.6725 	&24.15/0.6625	&24.51/0.6752	&23.83/0.6441 \\
Uformer~\cite{DBLP:conf/cvpr/WangCBZLL22}	&26.92/0.7501	&24.74/0.6414	&23.81/0.5901 	&23.51/0.5845	&23.86/0.5965	 &23.24/0.5654 \\
Restormer~\cite{DBLP:conf/cvpr/ZamirA0HK022}	&25.77/0.6786 	&23.21/0.5588 	&22.12/0.5065	&21.40/0.5024 	&21.97/0.5160	&21.08/0.4791\\
NAFNet~\cite{DBLP:conf/eccv/ChenCZS22}	&25.30/0.6619	&23.92/0.6122	&23.35/0.5919	 &22.86/0.5785 	&23.21/0.5906	 &22.66/0.5665\\
GRL~\cite{DBLP:conf/cvpr/LiFXDRTG23}	&27.29/0.7845	&\underline{25.52}/\underline{0.7358}	&\underline{24.78}/\underline{0.7138}	&\underline{24.59}/\underline{0.7091}	&\underline{24.82}/\underline{0.7139}	 &\underline{24.32}/\underline{0.7014}\\
CGNet~\cite{ghasemabadi2024cascadedgaze}	&25.13/0.6382	&23.55/0.5727	&22.94/0.5467 	&22.44/0.5422	&22.82/0.5539	&22.17/0.5219 \\
Xformer~\cite{DBLP:conf/iclr/ZhangZGDKY24}	&24.95/0.6188	&21.96/0.4765	&20.79/0.4214 	 &20.42/0.4198 	&20.80/0.4277	&20.05/0.4031\\
MHNet~\cite{DBLP:journals/pr/GaoZYD25}	&24.41/0.6095	&21.75/0.4741 	 &20.63/0.4181 	&20.00/0.4132 	&20.53/0.4248 	&19.70/0.3960\\
AKDT~\cite{DBLP:conf/visigrapp/BrateanuBAO25}	&25.23/0.6368	&23.16/0.5437	&22.34/0.5057	&21.95/0.5040	&22.33/0.5125	 &21.66/0.4875 \\
ESC~\cite{DBLP:journals/corr/abs-2503-06671}	&25.95/0.6752	&23.77/0.5722 	 &22.90/0.5282 	&22.54/0.5227	&22.91/0.5323	&22.24/0.5075\\
CRWKV~\cite{DBLP:journals/corr/abs-2505-02705}	&21.95/0.4964	&19.36/0.3847	&18.37/0.3416 	 &17.90/0.3385	&18.30/0.3429	&17.58/0.3261\\
Ours	&\textbf{27.69}/\textbf{0.8088}	&\textbf{26.02}/\textbf{0.7730}	&\textbf{25.31}/\textbf{0.7579}	&\textbf{25.13}/\textbf{0.7557}	&\textbf{25.35}/\textbf{0.7576}	 &\textbf{24.87}/\textbf{0.7509} \\

\midrule
\multicolumn{7}{c}{Urban100} \\
\midrule
SADNet~\cite{DBLP:conf/eccv/ChangLFX20}	 &24.03/0.7909 	&22.30/0.7274	&21.61/0.6954 	&21.44/0.6904	&21.64/0.6978	&21.40/0.6859\\
SwinIR~\cite{DBLP:conf/iccvw/LiangCSZGT21}	&\underline{25.32}/\underline{0.8264}	 &23.19/0.7531	&22.31/0.7150	&22.09/0.7076	&22.32/0.7174 	&22.05/0.7053 \\
Uformer~\cite{DBLP:conf/cvpr/WangCBZLL22}	&24.75/0.7940	&22.48/0.6964 	 &21.58/0.6519 	&21.35/0.6467	&21.59/0.6569 	&21.34/0.6440\\
Restormer~\cite{DBLP:conf/cvpr/ZamirA0HK022}	&23.89/0.7386	&21.35/0.6349	&20.32/0.5882 	&19.84/0.5822	&20.21/0.5942 	&19.81/0.5783\\
NAFNet~\cite{DBLP:conf/eccv/ChenCZS22}	 &23.68/0.7357 	&22.06/0.6778	&21.41/0.6519 	&21.07/0.6383 	&21.32/0.6524	&21.04/0.6371\\
GRL~\cite{DBLP:conf/cvpr/LiFXDRTG23}	&25.21/0.8224	&\underline{23.24}/\underline{0.7648} 	&\underline{22.46}/\underline{0.7385}	&\underline{22.29}/\underline{0.7342} 	&\underline{22.49}/\underline{0.7389}	&\underline{22.24}/\underline{0.7322}\\
CGNet~\cite{ghasemabadi2024cascadedgaze}	 &23.83/0.7333	&22.01/0.6660 	&21.31/0.6372 	 &20.97/0.6275	&21.24/0.6414 	&20.90/0.6232\\
Xformer~\cite{DBLP:conf/iclr/ZhangZGDKY24}	&23.40/0.7013 	&20.63/0.5836	 &19.58/0.5366 	&19.31/0.5342 	&19.58/0.5425 	&19.25/0.5305\\
MHNet~\cite{DBLP:journals/pr/GaoZYD25}	&23.01/0.7000 	&20.37/0.5776 	 &19.32/0.5270	&18.86/0.5211	&19.24/0.5327 	&18.86/0.5181 \\
AKDT~\cite{DBLP:conf/visigrapp/BrateanuBAO25}	&23.74/0.7181	&21.70/0.6373	&20.90/0.6032	&20.63/0.5996	&20.88/0.6086	&20.57/0.5955\\
ESC~\cite{DBLP:journals/corr/abs-2503-06671}	&23.81/0.7271	&21.58/0.6371	&20.73/0.5991	 &20.48/0.5948	&20.72/0.6025	&20.41/0.5905\\ 
CRWKV~\cite{DBLP:journals/corr/abs-2505-02705}	&21.45/0.6315	&19.00/0.5289	&18.08/0.4880 	&17.70/0.4843	 &18.02/0.4921 	&17.61/0.4783\\
Ours	&\textbf{25.91}/\textbf{0.8593}	&\textbf{24.00}/\textbf{0.8193}	&\textbf{23.23}/\textbf{0.8005} 	&\textbf{23.06}/\textbf{0.7975}	 &\textbf{23.27}/\textbf{0.8005} 	&\textbf{23.01}/\textbf{0.7959}\\

\midrule
\multicolumn{7}{c}{CBSD68} \\
\midrule
SADNet~\cite{DBLP:conf/eccv/ChangLFX20}	&25.31/0.7543	&23.80/0.6912 	&23.16/0.6590 	 &23.02/0.6554 	&23.19/0.6624 	&22.94/0.6510 \\
SwinIR~\cite{DBLP:conf/iccvw/LiangCSZGT21}	&\underline{26.36}/\underline{0.7855}	&24.50/0.7176	&23.66/0.6805 	&23.44/0.6732	&23.64/0.6826 	&23.37/0.6695 \\
Uformer~\cite{DBLP:conf/cvpr/WangCBZLL22}	&25.79/0.7500 	&23.77/0.6535	&22.91/0.6082 	 &22.69/0.6031 	&22.91/0.6136	&22.63/0.5998\\
Restormer~\cite{DBLP:conf/cvpr/ZamirA0HK022}	&24.43/0.6703 	&22.13/0.5682	 &21.15/0.5223	&20.63/0.5166 	&21.00/0.5294 	&20.49/0.5077\\
NAFNet~\cite{DBLP:conf/eccv/ChenCZS22}	&23.52/0.6344	&22.26/0.5809 	&21.77/0.5593	&21.43/0.5500	&21.64/0.5603	&21.31/0.5434 \\
GRL~\cite{DBLP:conf/cvpr/LiFXDRTG23}	&26.26/0.7783	&\underline{24.56}/\underline{0.7260} 	&\underline{23.85}/\underline{0.7024} 	&\underline{23.72}/\underline{0.6985}	&\underline{23.86}/\underline{0.7023}	&\underline{23.64}/\underline{0.6976}\\
CGNet~\cite{ghasemabadi2024cascadedgaze}	&23.97/0.6446	 &22.42/0.5796 	&21.82/0.5528 	&21.49/0.5485 	&21.71/0.5589	 &21.32/0.5359\\
Xformer~\cite{DBLP:conf/iclr/ZhangZGDKY24}	&23.90/0.6302	&21.23/0.5053 	&20.18/0.4546	&19.90/0.4511	 &20.16/0.4603 	&19.81/0.4493\\
MHNet~\cite{DBLP:journals/pr/GaoZYD25}	&23.21/0.6153	&20.74/0.4933	 &19.70/0.4416	&19.21/0.4357 	 &19.56/0.4467	&19.14/0.4339\\
AKDT~\cite{DBLP:conf/visigrapp/BrateanuBAO25}	&23.97/0.6375	&22.04/0.5514	&21.28/0.5163	&21.01/0.5140	&21.24/0.5217 	 &20.88/0.5091\\
ESC~\cite{DBLP:journals/corr/abs-2503-06671}	&24.87/0.6843	 &22.88/0.5950	&22.08/0.5558 	&21.84/0.5504	&22.07/0.5604	&21.67/0.5440\\
CRWKV~\cite{DBLP:journals/corr/abs-2505-02705}	&21.04/0.5211 	&18.57/0.4077	&17.63/0.3637	&17.29/0.3604	&17.56/0.3674	&17.18/0.3586\\
Ours	&\textbf{26.55}/\textbf{0.7974} 	&\textbf{24.93}/\textbf{0.7549} 	&\textbf{24.25}/\textbf{0.7365}	&\textbf{24.13}/\textbf{0.7337}	&\textbf{24.27}/\textbf{0.7358} 	&\textbf{24.05}/\textbf{0.7345}\\

\bottomrule
\end{tabular}}
\caption{Quantitative denoising comparisons (PSNR/SSIM) on the different datasets. The best results are marked in \textbf{bold}, while the second-best ones are \underline{underlined}. "*" indicates that the base noise follows the standard Laplacian distribution.}
\label{tab:psnr_ssim_total*}
\vspace{-20pt}
\end{table*}

\noindent \textbf{Implementation details}. We extract $128 \times 128$ size of patches for training. The data augmentation is conducted on input patches with random horizontal flips and rotations. The batch size is set to 4. For synthetic OOD noises, we adopt the Adam solver with $\beta_1=0.9$ and $\beta_2=0.999$ for 1000 epochs. The learning rate starts from $2 \times 10^{-4}$ and decays by a factor of 0.5 every 200 epochs. For real-world cases, we first freeze the pre-trained blind denoiser. Then, we set the epoch and initial learning rate to 250 and $1 \times 10^{-4}$, respectively, to train the lightweight noise translator based on our SEVB. The corresponding training process and loss function refer to this noise adaptive method~\cite{DBLP:journals/corr/abs-2412-04727}. All the experiments are conducted with PyTorch and NVIDIA GeForce RTX 3090 GPUs. Our network and others are trained under the blind denoising task. To facilitate image processing, data centralization and its corresponding inverse operation are applied at the start and end of all networks, respectively.

\noindent \textbf{Synthetic OOD Noise.}
For the OOD scenarios, we synthesize several pixel-dependent noises, and the corresponding noise level maps are spatially non-uniform. \textit{1) Poisson noise:} $Y=X+\alpha  \cdot N_p$, where $Y$, $X$, and $N_p$ respectively represent the observed image, the clear image, and the Poisson noise component. The synthesis of $N_p$ depends on the pixels of $X$. $\alpha \in\left \{4, 5, 6 \right \} $ represents the scaling factor of the noise level. 
\textit{2) Speckle noise:} $Y=X+\sigma \cdot \sqrt{X}\odot N$, where $N$ is derived from the standard Gaussian distribution. The noise level at each position is affected by the corresponding pixel, while $\sigma \in\left \{60,80,90 \right \} $ controls the global damage intensity. 
\textit{3) Mixture Noise:} $Y=X+\sigma \cdot \sqrt{X}\odot N + \alpha  \cdot N_p$, where $\sigma=90$ and $\alpha=6$. Its noise level map and base noise are more complex. 
\textit{4) Variants of Speckle Noise.} To further enhance the level map non-uniformity of Speckle noise, we adopt spatial functions (sine-cosine, peaks, and Gaussian kernels) to construct its variants (Variant 1-3). The form of the variant is $Y_{ij}=X_{ij}+\phi_{ij}  \cdot \sqrt{X_{ij}}  \cdot N_{ij}$, where $\phi_{ij}$ represents the local scaling factor generated by the spatial position $(i,j)$. The overall value of $\phi$ is normalized within the range of $\left [60, 120  \right ] $. 
\textit{5) Extension of base noise.} We replace the $N$ in the Speckle noise with the $N_l$ sampled from the Laplace distribution, expanding the evaluation of OOD robustness.

\begin{figure*}[!t]
    \centering

        \subfigure[ Speckle Noise of $\sigma=40$ and its variant 1]{\includegraphics[scale=0.43]{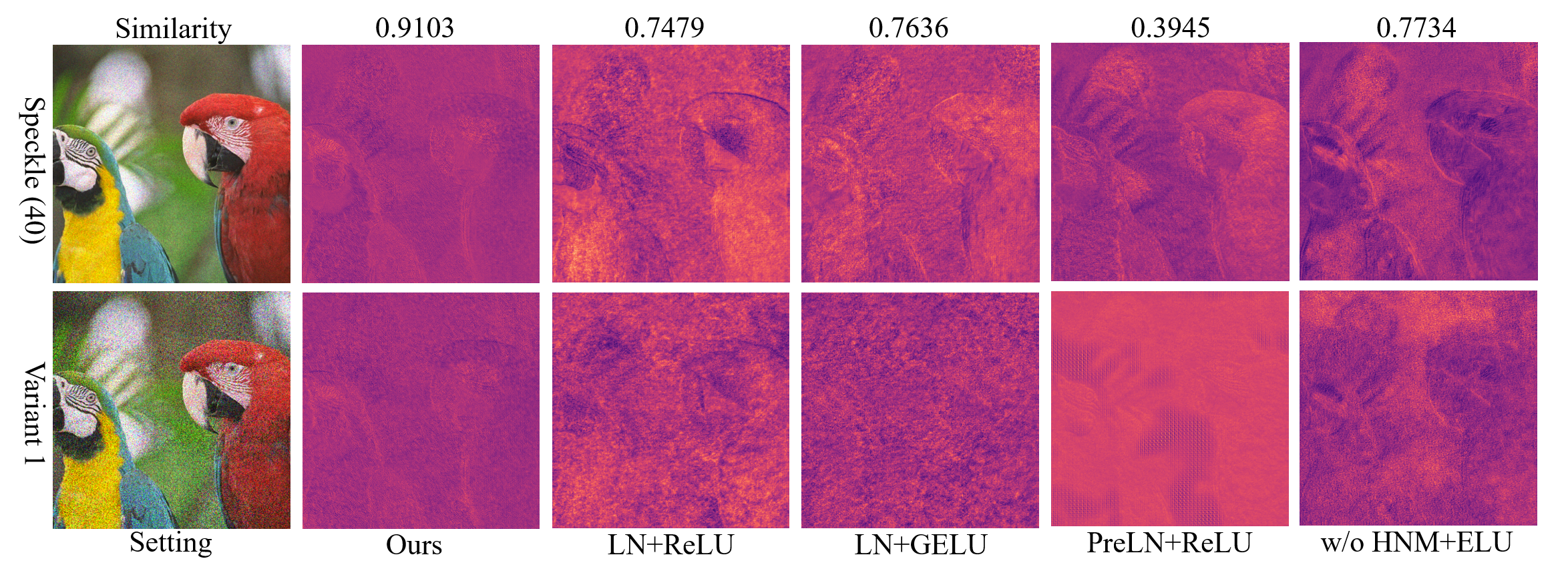}}
        \vspace{-7pt}
        \subfigure[ Speckle Noise of $\sigma=40$ and its variant 2]{\includegraphics[scale=0.43]{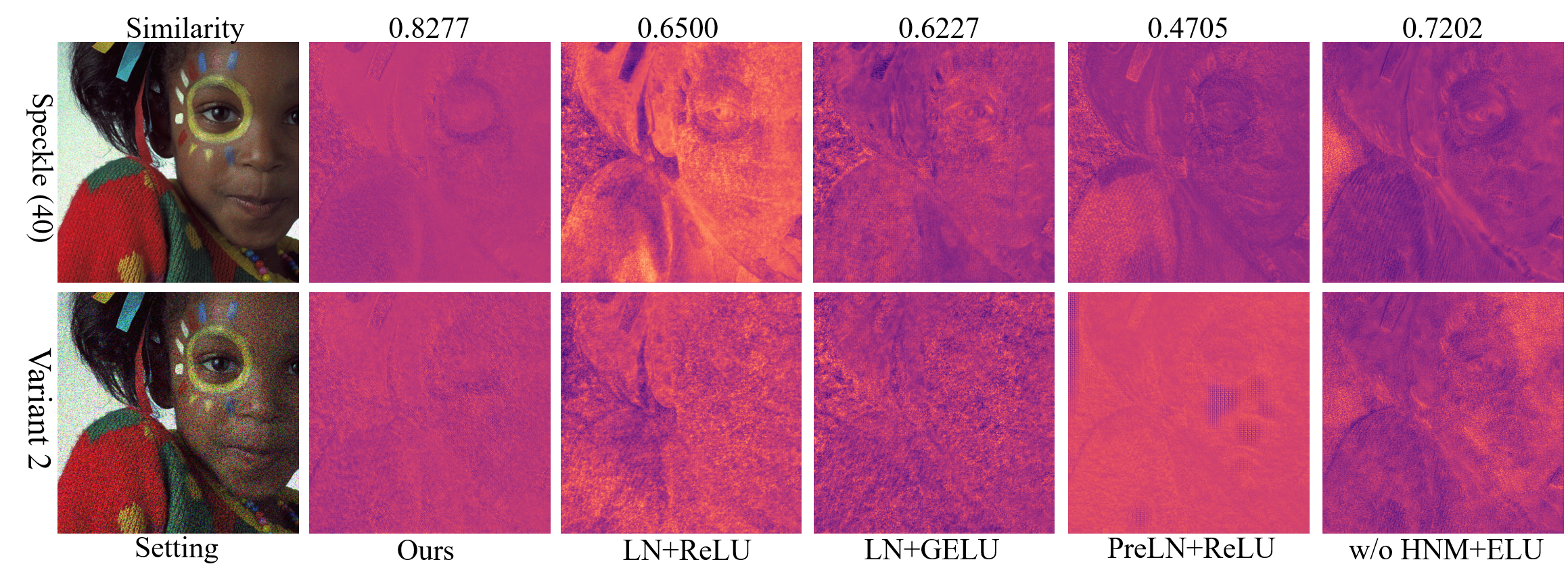}}
        \vspace{-7pt}
    \caption{Comparison in visual and cosine similarities of feature maps under different noise level maps.}
    \vspace{-17pt}
    \label{fig:feature analysis add} 
\end{figure*}

\begin{table}[!ht]
\centering
\footnotesize 
\setlength{\tabcolsep}{0.95mm}{
\begin{tabular}{lccccccccc}
\toprule 
Method & Params (M) & MACs (G) &Times (ms) &PSNR/SSIM \\ 
\midrule
SADNet	&4.23&18.97	&0.8	&25.70/0.7238 \\
SwinIR	&3.14&204.99	&53.1	&25.92/0.7325\\
Uformer	&20.63 & 41.55 	&5.1	&25.44/0.6844\\
Restormer	&26.13	&141.24	&4.8	&23.67/0.6038\\
NAFNet	&29.16& 16.25	&3.4	&23.92/0.6161\\
GRL	&5.11&316.57	&10.4	&26.18/0.7593\\
CGNet	&185.38 &115.08	&5.0	&23.75/0.5956\\
Xformer	&25.22& 143.12	&33.0	 &24.36/0.5898\\
MHNet	&67.54&127.40	&6.7	&23.98/0.5878\\
AKDT	&11.43 &56.89 	&0.7	 &25.08/0.6292\\
ESC    &0.94   &51.59   &0.5    &24.61/0.6206\\
CRWKV	&21.08& 113.55	&31.2	&23.00/0.5585\\
Ours	&31.32 &158.69	&1.4	&26.62/0.7889\\
Ours(lw)	&16.28& 76.50 	&0.7	&26.56/0.7857\\
\bottomrule
\end{tabular}
}
    \caption{Comparison of different networks in terms of complexity, per-image inference time, and average OOD evaluation on the Kodak24 dataset.}
    \label{tab:complexity}
\vspace{-20pt}
\end{table}

\noindent \textbf{Feature Analysis.}
We provide additional examples of feature comparison. As shown in Fig.~\ref{fig:feature analysis add}, our scale-equivariant network consistently captures highly similar features, even when faced with significant differences in global corruption intensity and inconsistent spatial variation patterns in the level map. Conversely, networks that violate this design principle are overly sensitive to such variations. For instance, networks incorporating scale-independent Layer Norm or exponential activations (w/o HNM+ELU) produce markedly dissimilar feature maps. Furthermore, while these non-equivariant designs may yield usable features at low noise levels, they severely distort the network's representations at high levels. particularly, placing Layer Norm at the front of a block (PreLN+ReLU) results in more severe feature distortion under high noise compared to putting it in the middle (LN+ReLU). In short, scale equivariance enables the network to generate aligned feature maps that are decoupled from the noise level map, thereby laying the foundation for robust OOD denoising.

\noindent \textbf{Quantitative Comparison.}
We provide more detailed quantitative comparisons for Speckle and Poisson noise. As shown in Tab.~\ref{tab:psnr_ssim_total}, our scale-equivariant network consistently achieves superior OOD denoising performance across various corruption levels. Conversely, existing methods, which violate the principle of scale equivariance, suffer from a lack of robustness, even when equipped with advanced modules. Furthermore, when the random distribution used to synthesize Speckle noise is replaced from Gaussian to Laplacian, our scale-equivariant network still demonstrates OOD denoising advantage, as shown in Tab.~\ref{tab:psnr_ssim_total*}.  This indicates its robustness to changes in the base noise distribution.

\noindent \textbf{Qualitative Comparison.}
As shown in  Fig.~\ref{fig:mixture} to Fig.~\ref{fig:variant3*}, our method produces clean denoised images, restoring clear edges and partial texture information, even at high levels of corruption. In contrast, affected by the non-uniform noise level map, the denoising capabilities of existing methods are spatially inconsistent and exhibit different patterns of degradation,~\emph{e.g.}, coarse results, residual noise, and artifacts.

\noindent \textbf{Real-World Extension.}
The principle of scale equivariance assumes independently distributed noise, a condition not met by spatially correlated real-world camera noise. To bridge this gap, we construct the SEVB-based noise translator trained on synthetic ISP noise. This translator maps correlated noise to a distribution that our pretrained denoiser can process.
As shown in Fig.~\ref{fig:cc1} and Fig.~\ref{fig:highiso1},  denoising networks trained directly on ISP noise fail to generalize, whereas our approach demonstrates outstanding robustness.   This suggests that our scale-equivariant network offers a powerful denoising prior, and combining it with noise adaptation provides an effective path for generalizing from synthetic to real data.

\noindent \textbf{Complexity Comparison}
According to Tab.~\ref{tab:complexity}, our model offers fast inference, though its complexity is not the lowest. We observe that OOD robustness does not correlate with model complexity, as both more and less complex networks perform poorly. To confirm this, we build a lightweight version of our network that retains comparable robustness with only half the complexity. This result confirms that OOD robustness is primarily determined by the scale-equivariant design, not model capacity.

\begin{figure*}[t]
\centering
\includegraphics[scale=0.16]{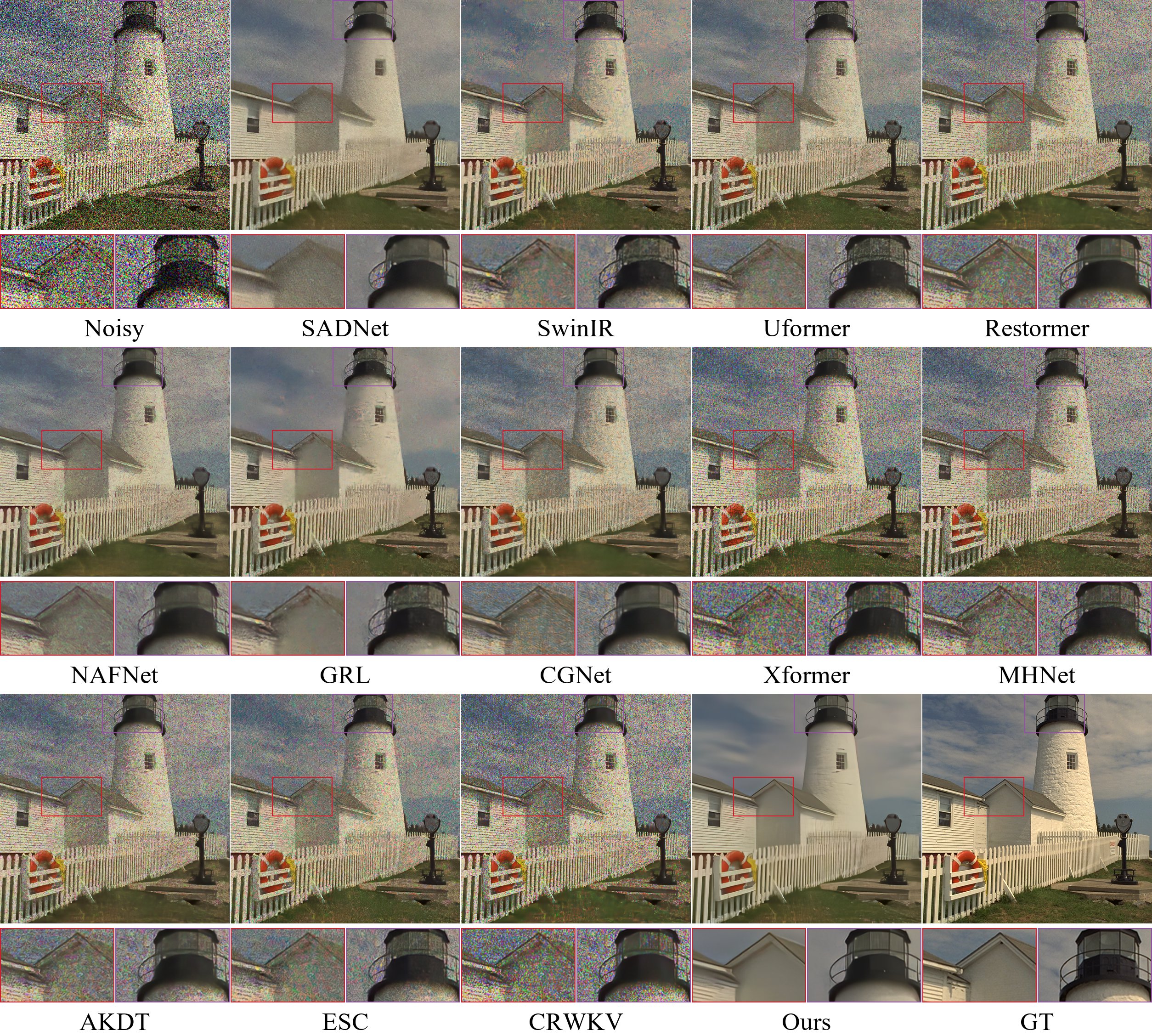}
\caption{Qualitative comparisons on mixture noise of $\sigma=90$ and $\alpha=6$.}
\vspace{-10pt}
\label{fig:mixture}
\end{figure*}

\begin{figure*}[t]
\centering
\includegraphics[scale=0.16]{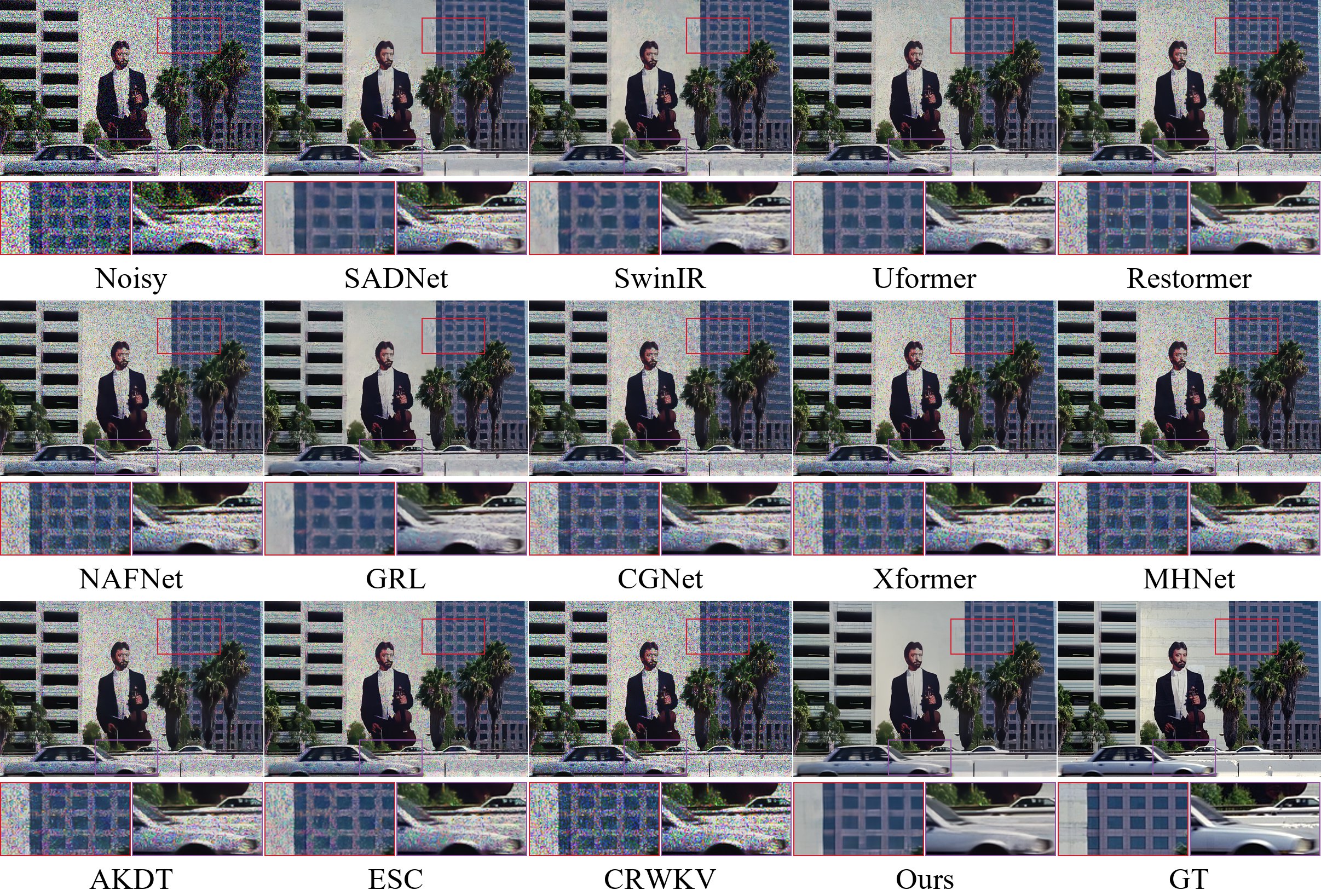}
\caption{Qualitative comparisons on Poisson noise of $\alpha=6$.}
\vspace{-20pt}
\label{fig:poisson(6)}
\end{figure*}

\begin{figure*}[t]
\centering
\includegraphics[scale=0.14]{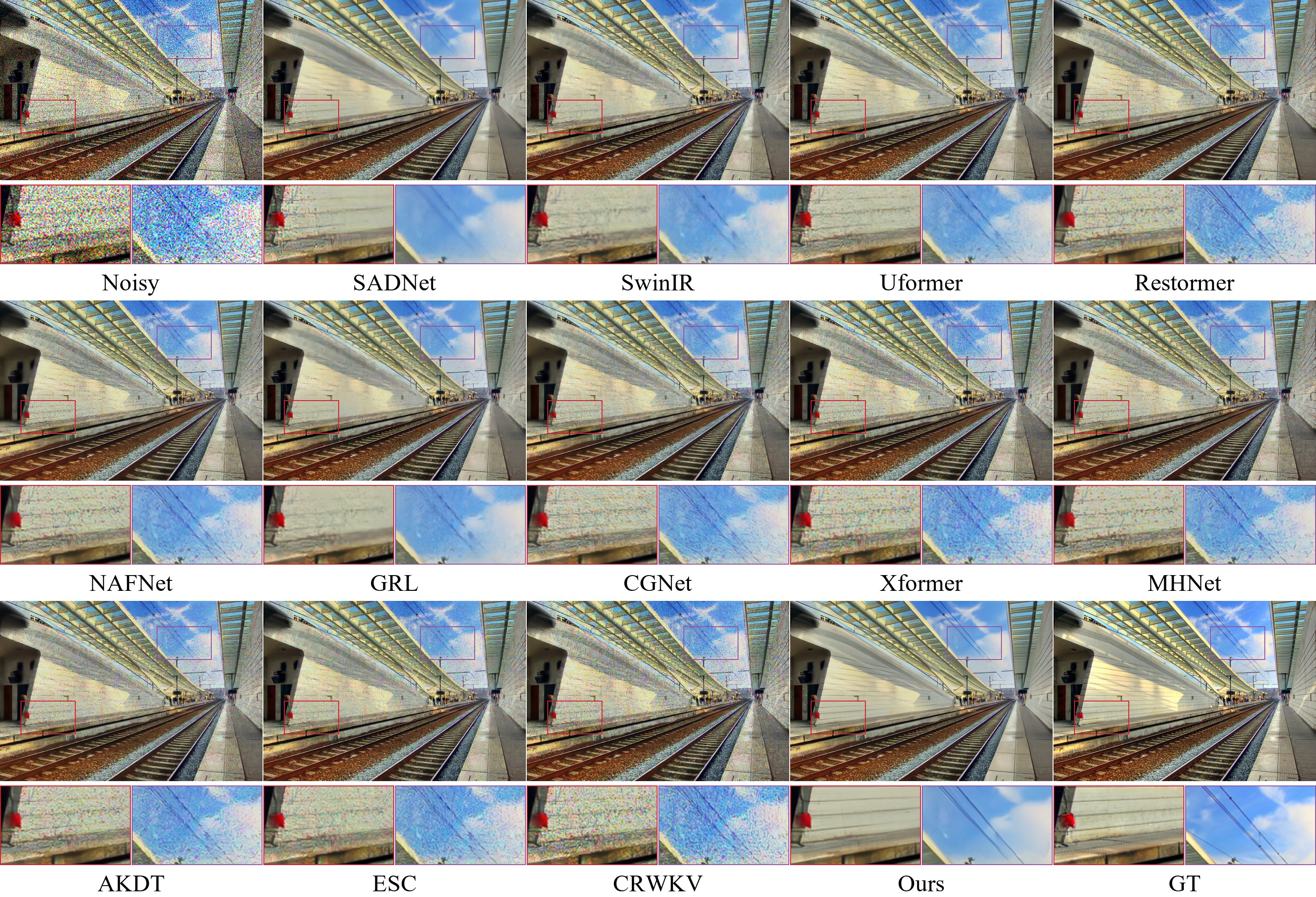}
\caption{Qualitative comparisons on Poisson noise of $\alpha=5$.}
\vspace{-15pt}
\label{fig:poisson(5)}
\end{figure*}

\begin{figure*}[t]
\centering
\includegraphics[scale=0.154]{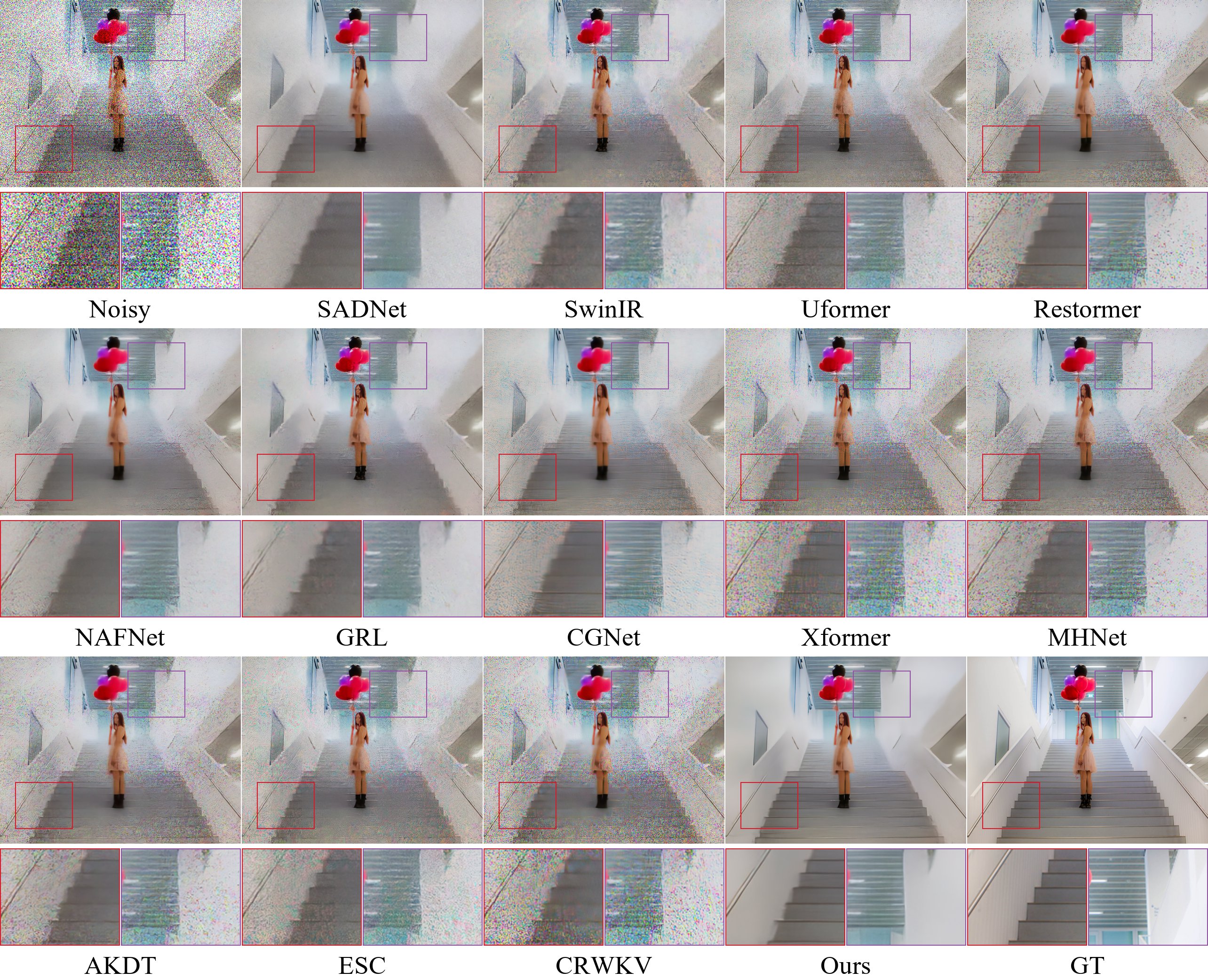}
\caption{Qualitative comparisons on variant 1 of Speckle noise.}
\vspace{-20pt}
\label{fig:variant1}
\end{figure*}

\begin{figure*}[t]
\centering
\includegraphics[scale=0.14]{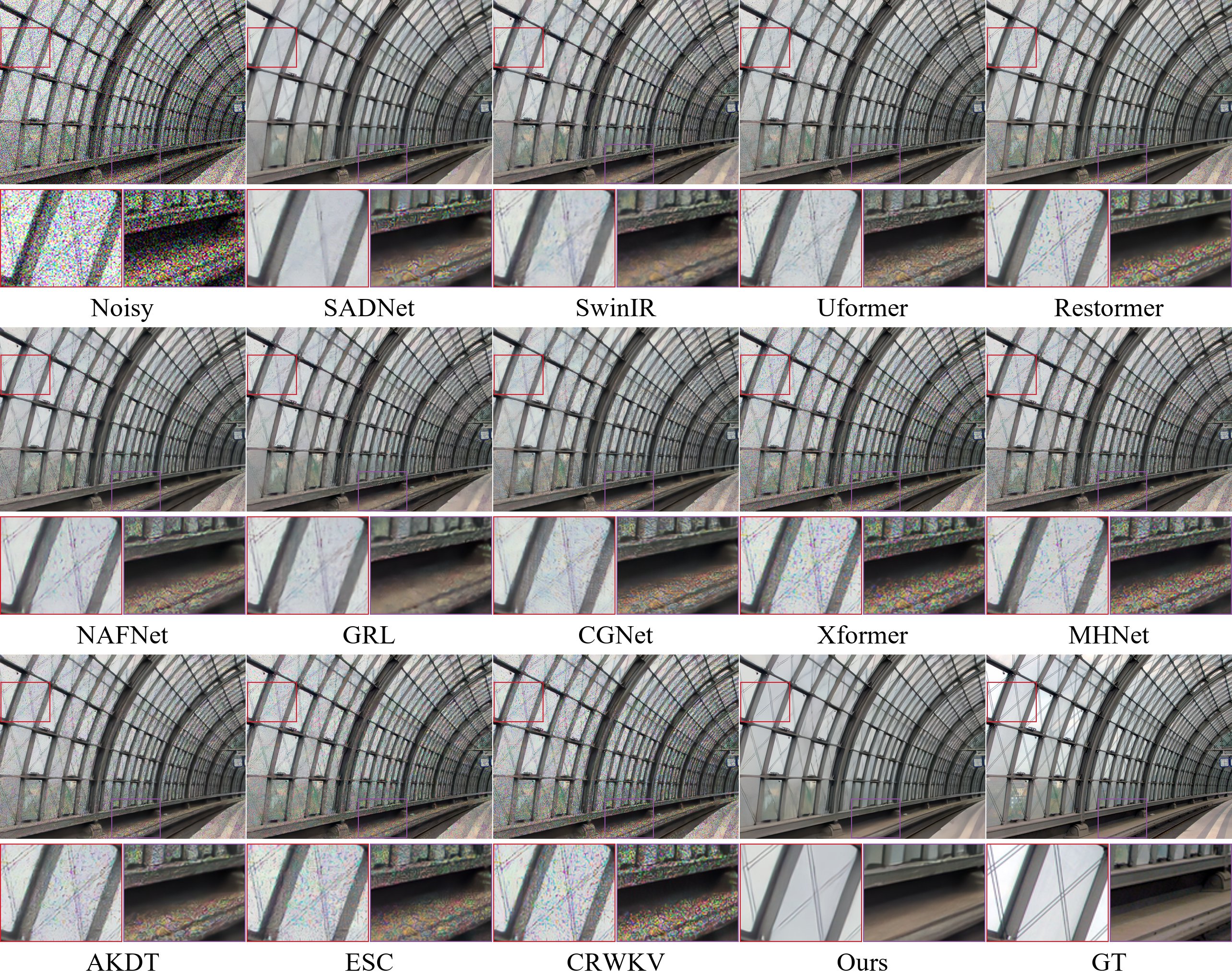}
\caption{Qualitative comparisons on variant 2 of Speckle noise.}
\vspace{-11pt}
\label{fig:variant2}
\end{figure*}

\begin{figure*}[t]
\centering
\includegraphics[scale=0.14]{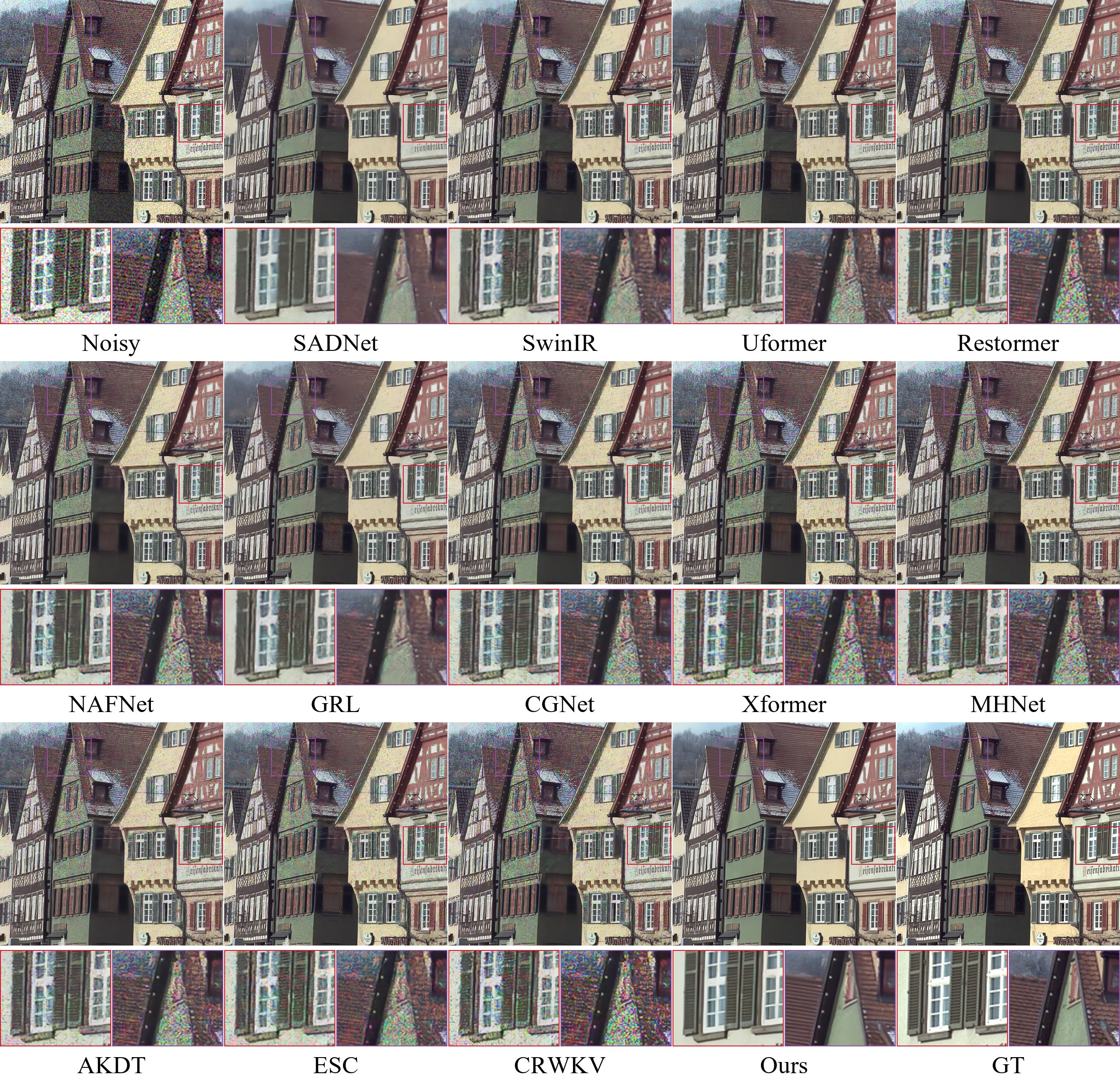}
\caption{Qualitative comparisons on variant 3 of Speckle noise.}
\vspace{-20pt}
\label{fig:variant3}
\end{figure*}

\begin{figure*}[t]
\centering
\includegraphics[scale=0.13]{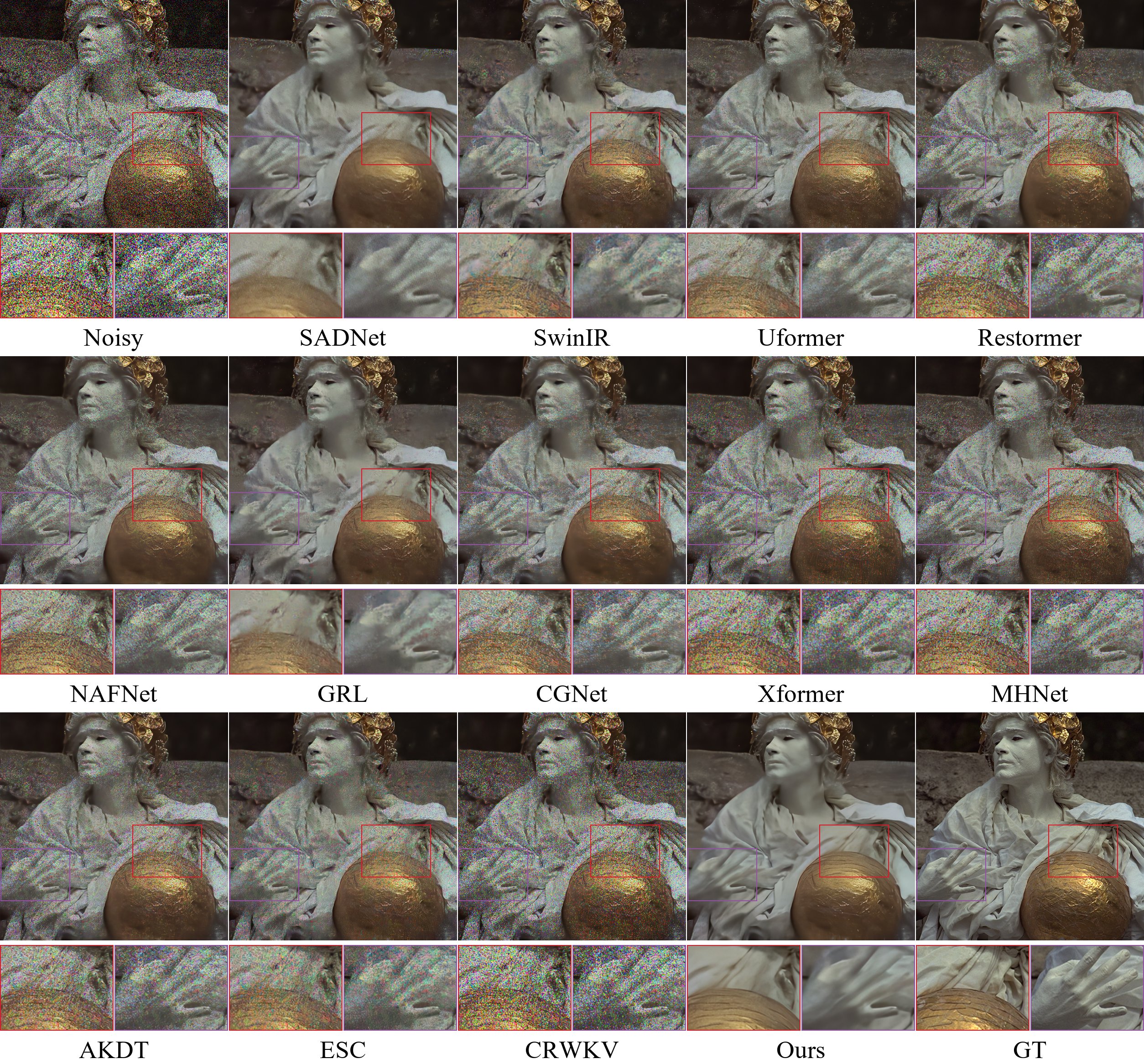}
\caption{Qualitative comparisons on variant 1 of Speckle noise*.}
\vspace{-15pt}
\label{fig:variant1*}
\end{figure*}

\begin{figure*}[t]
\centering
\includegraphics[scale=0.13]{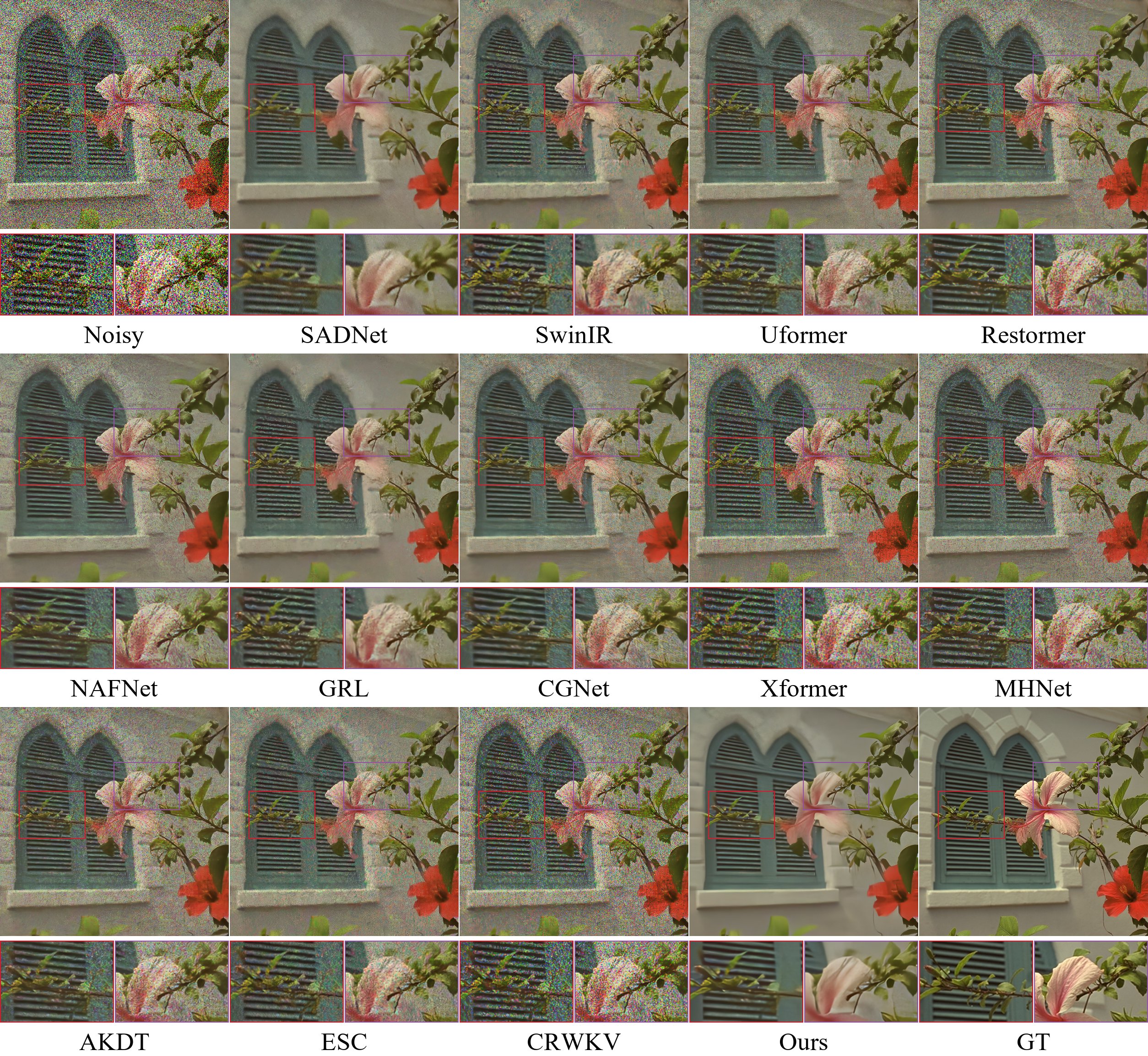}
\caption{Qualitative comparisons on variant 3 of Speckle noise*.}
\vspace{-15pt}
\label{fig:variant3*}
\end{figure*}

\begin{figure*}[t]
\centering
\includegraphics[scale=0.13]{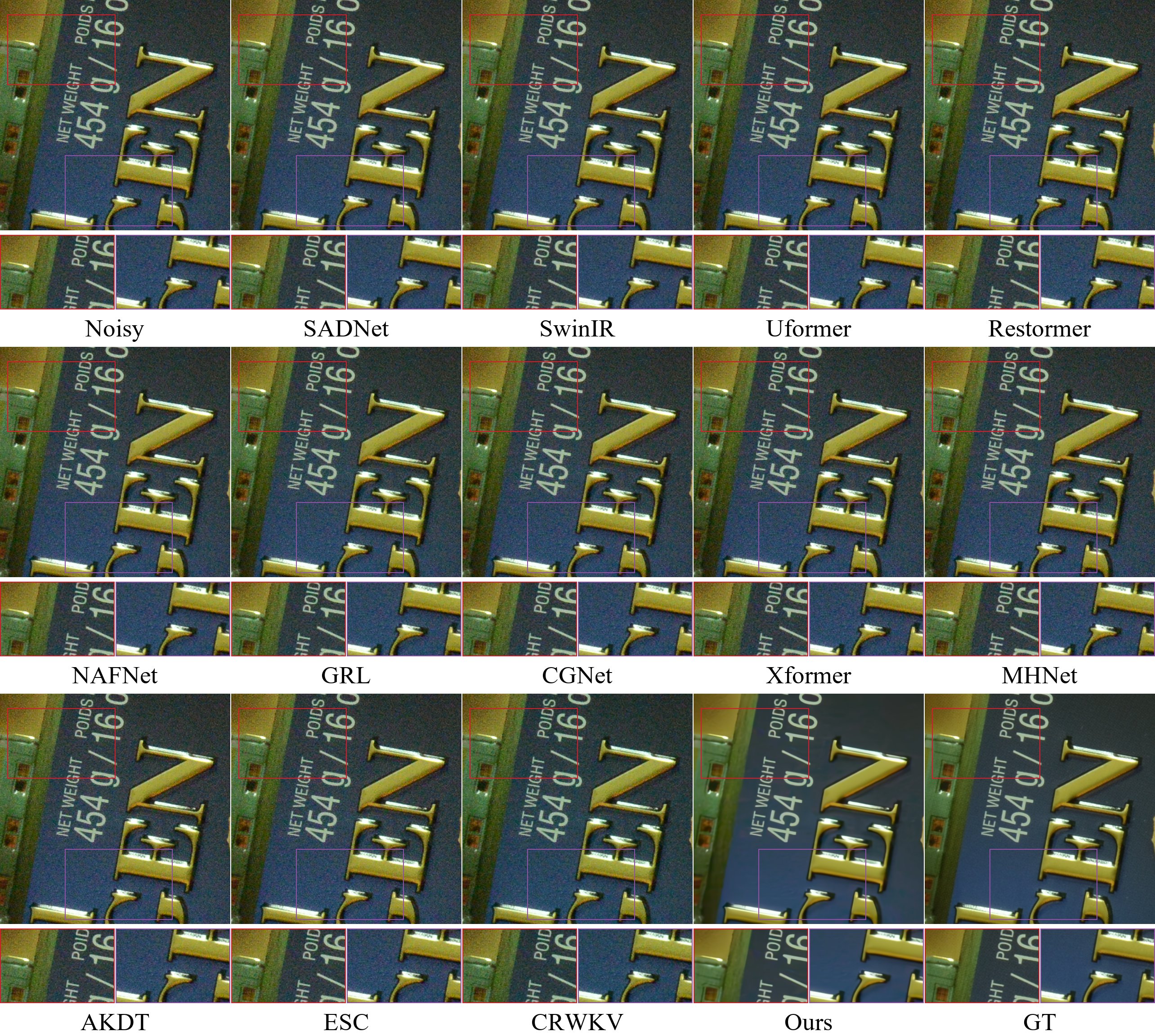}
\caption{Qualitative comparisons on CC benchmark.}
\vspace{-12pt}
\label{fig:cc1}
\end{figure*}

\begin{figure*}[t]
\centering
\includegraphics[scale=0.13]{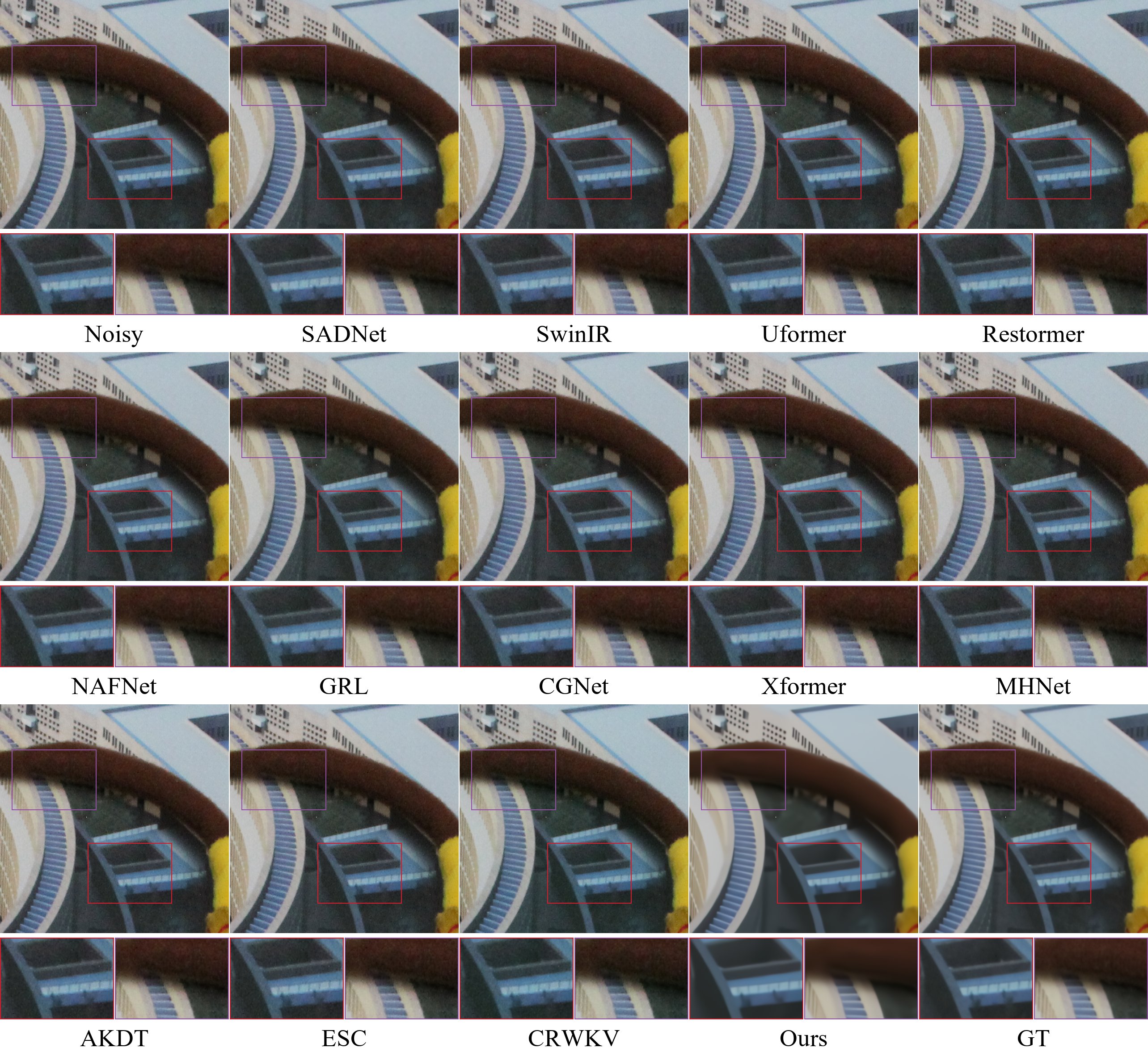}
\caption{Qualitative comparisons on HighISO benchmark.}
\vspace{-12pt}
\label{fig:highiso1}
\end{figure*}

\end{document}